\documentclass[conference]{IEEEtran}
\IEEEoverridecommandlockouts
\usepackage{cite}
\usepackage{amsmath,amssymb,amsfonts}
\usepackage{algorithmic}
\usepackage{graphicx}
\usepackage{textcomp}
\usepackage{xcolor}

\usepackage{booktabs}
\usepackage{multirow}

\usepackage{bbold}

\usepackage{caption}
\usepackage{subcaption}

\usepackage{float}
\usepackage{amsthm}

\newtheorem{theorem}{Theorem}
\newtheorem{mydef}{Definition}

\def\BibTeX{{\rm B\kern-.05em{\sc i\kern-.025em b}\kern-.08em
    T\kern-.1667em\lower.7ex\hbox{E}\kern-.125emX}}
\begin{document}

\title{Transferable Attack against Face Swapping in an Extended Space}

\author{
    \IEEEauthorblockN{Mingzhi Lyu\thanks{$^{\dagger}$  These authors contributed equally to this work.}$^{\dagger}$}
    \IEEEauthorblockA{
    \textit{Nanyang Technological University}\\
    Singapore \\
    lyum0002@e.ntu.edu.sg}
\and
    \IEEEauthorblockN{Yi Huang$^{\dagger}$  }
    \IEEEauthorblockA{
    \textit{Nanyang Technological University}\\
    Singapore \\
    shellbyhuang@gmail.com}

\and
    \IEEEauthorblockN{Jun Xie}
    \IEEEauthorblockA{
    \textit{Nanyang Technological University}\\
    Singapore \\
    jun.xie@ntu.edu.sg} 
\and
    \IEEEauthorblockN{ } 
    \IEEEauthorblockA{ } 
\and
    \IEEEauthorblockN{ } 
    \IEEEauthorblockA{ } 
\and 
    \IEEEauthorblockN{Zihao Zhao}
    \IEEEauthorblockA{
    \textit{Nanyang Technological University}\\
    Singapore \\
    zihao005@e.ntu.edu.sg}
\and
    \IEEEauthorblockN{Hong Xu}
    \IEEEauthorblockA{
    \textit{Nanyang Technological University}\\
    Singapore \\
    xuhong@ntu.edu.sg}
\and
    \IEEEauthorblockN{Kong Wai-Kin Adams}
    \IEEEauthorblockA{
    \textit{Nanyang Technological University}\\
    Singapore \\
    adamskong@ntu.edu.sg}

}
\maketitle
\begingroup\renewcommand{\thefootnote}{}\footnotetext{\textcopyright{} 2025 IEEE. Personal use of this material is permitted. Permission from IEEE must be obtained for all other uses, in any current or future media, including reprinting/republishing this material for advertising or promotional purposes, creating new collective works, for resale or redistribution to servers or lists, or reuse of any copyrighted component of this work in other works.}\endgroup

\begin{abstract}
   Although deep Face Swapping (FS) models may benefit the entertainment industry, they pose severe threats to privacy and security. Existing protections, including deepfake detection and adversarial perturbation, are either passive responses or ineffective to unseen subject-agnostic FS models. In this paper, we propose a transferable attack against subject-agnostic FS models named Additive Identity attack based on a Relighting function (AIR). AIR leverages reillumination and additive perturbations to mislead the identity extraction modules in subject-agnostic FS models. By using these two types of perturbations simultaneously, the attack space is extended such that stronger but more visually natural adversarial examples can be identified. To further enhance the visual quality while preserving the effectiveness of the attack, an adaptive translation-invariant operation and an illumination control scheme are designed for AIR. Unlike other methods, AIR does not require a surrogate FS model to achieve high transferability. In addition, a mathematical proof is given for the extension of the attack space. Extensive experiments using 1000 image pairs across various state-of-the-art subject-agnostic FS models, including GAN and diffusion-based FS models, show that AIR surpasses all existing attacks in terms of both attack success rate and image quality.
\end{abstract}

\begin{IEEEkeywords}
DeepFake, generative models, adversarial attack
\end{IEEEkeywords}

\begin{center}
\small Accepted by IEEE International Conference on Multimedia and Expo (ICME) 2025. This arXiv version includes the appendix.
\end{center}

\section{Introduction}

FS refers to the process of swapping a person’s face (target face) with the face of another (source face). In recent years, advanced FS models, especially subject-agnostic FS models, have raised significant public concerns due to their potential misuse. Examples of potential misuse include blackmailing a victim\footnote{https://www.entrepreneur.com/article/414109} with fake materials and swaying elections by misrepresenting well-known politicians\footnote{https://www.brookings.edu/research/is-seeing-still-believing-the-deepfake-challenge-to-truth-in-politics/}. 
 
 To avoid serious consequences caused by these malicious applications, great efforts are being devoted to designing deepfake detectors \cite{ huang2022fakelocator}. 
 However, detectors can only spot fake content after it has been created and disseminated. Alternatively, some researchers proposed to use adversarial examples (AEs) \cite{huang2021cmua,ruiz2020disrupting,dong2021visually,aneja2022tafim, zhang2024dual} to mislead deepfake models into generating unsound results under the assumption of perfect knowledge. To safeguard face images against FS models in a black-box scenario, LaS-GSA \cite{yeh2021attack} was introduced. However, this query-based method has drawbacks:  1) users need access to the target FS model for each query, which is usually inaccessible, 2) no evidence shows that the generated AEs are transferable to protect against unseen models even using the queried model as surrogate in training, and 3) it requires many queries (40,000-70,000 in LaS-GSA). Alternatively, some \cite{huang2021initiative, dong2023restricted} utilized surrogate models and trained generators to create transferable AEs to defend against unseen deepfake models. However, their attacks do not specifically target subject-agnostic FS models, which are our focus due to their broad applicability. The substantial architectural differences between different face manipulation models restrict the effectiveness of their suggested attacks against subject-agnostic FS models.

Although some transfer-based attacks \cite{huang2021initiative} and \cite{dong2023restricted} have been proposed, there are several challenges in the context of attacking FS models.
 First, obtaining FS models can be difficult as developers may have concerns about sharing them publicly due to social and ethical reasons. Second, the large variation in training data and model architectures used by different FS models often limits the transferability of AEs between them. Our experiments show that AEs generated using white-box attacks on surrogate FS models perform poorly on target black-box models. Furthermore, striking a balance between the transferability and imperceptibility of adversarial perturbations is a significant challenge when attacking FS models. State-of-the-art techniques like TAIG \cite{huang2022transferable} have shown effectiveness in enhancing transferability but often introduce visible noise textures, as illustrated in Fig. \ref{fig.taig-diff}. Even with TAIG, FS outputs after attack still exhibit a low Attack Success Rate (ASR)
  of 0.12 when the perturbation budget $\epsilon$ is 0.02, and it increases to 0.49 as $\epsilon$ increases to 0.05. Increasing $\epsilon$ can improve ASR but introduces more noticeable noise, degrading image quality and affecting perception of normal faces. 

\begin{figure}
	\centering 

	\begin{subfigure}[b]{0.54\columnwidth}
		\centering
		\includegraphics[width=\columnwidth]{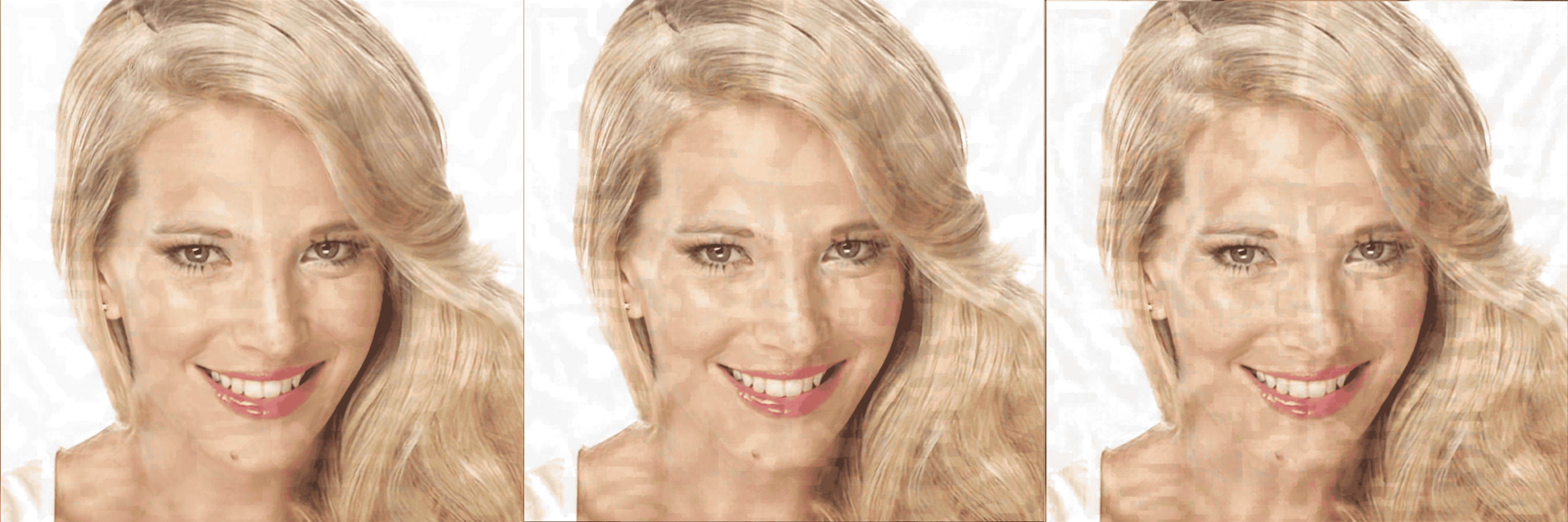}
		\caption{}
		\label{fig.taig-diff}
	\end{subfigure}
	\hfill
	\begin{subfigure}[b]{0.37\columnwidth}
		\centering
		\includegraphics[width=\columnwidth]{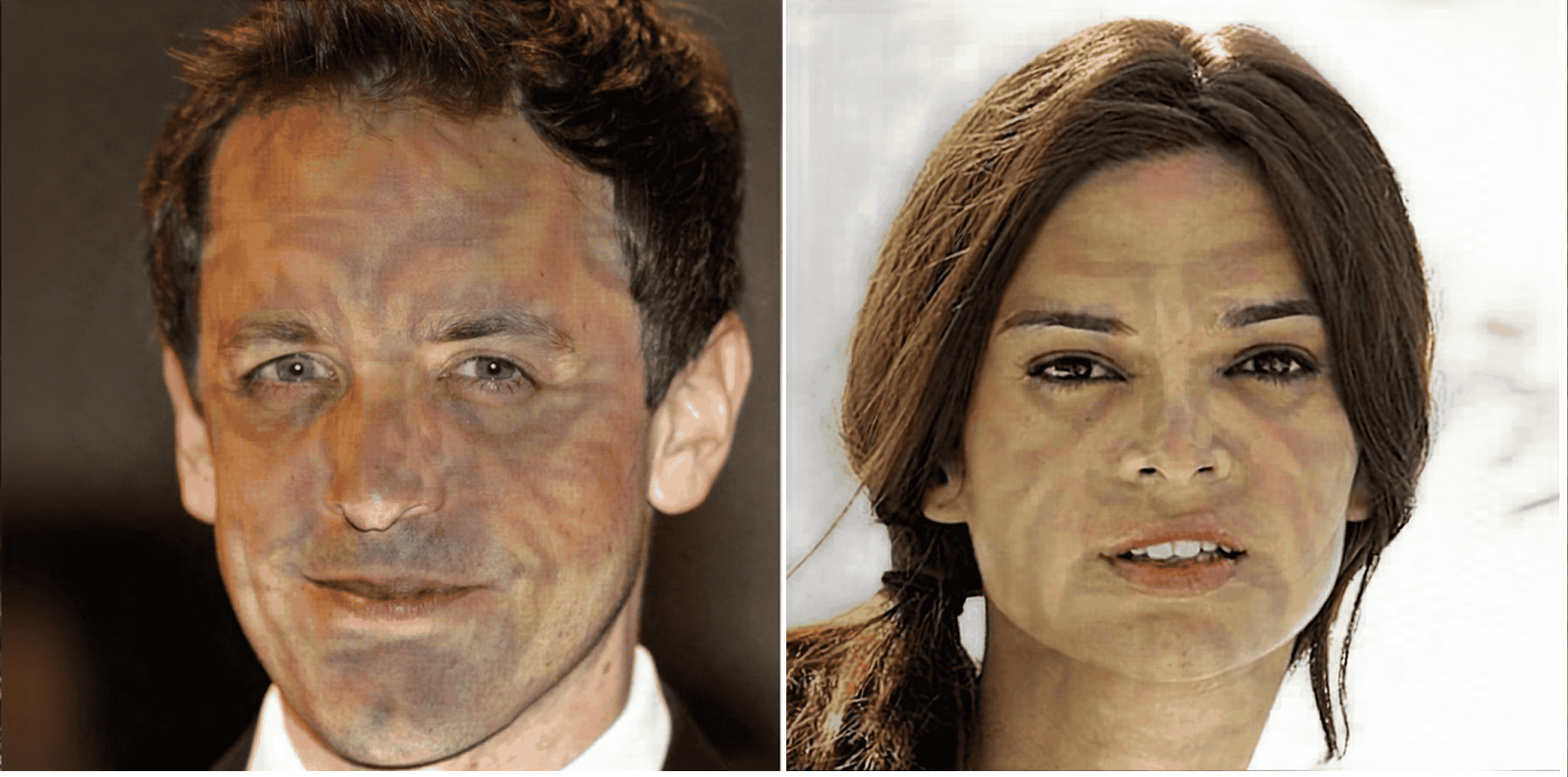}
		\caption{}
		\label{fig.ti}
	\end{subfigure}

	\caption{(a) AEs with TAIG under $\epsilon=0.02$ (left), $\epsilon=0.03$ (middle) and $\epsilon=0.05$ (right). (b) AEs with TI.}
		\label{Fig.textual-example} 
		\vspace{-0.6cm}
\end{figure}

To address these challenges, we propose the Additive Identity attack based on a Relighting function (AIR), which comprises two novel attack strategies: the Additive Identity Attack (AIA) and the Relighting Functional Attack (RFA). AIR features several key innovations:

\begin{enumerate}
    \item 
	Instead of directly employing FS models, AIR leverages face recognition models as surrogates, which circumvents the challenge of obtaining surrogate FS models while capitalizing on the reduced variability between face recognition models compared to full FS systems, thereby enhancing transferability.

    \item 
	By incorporating two types of perturbation, we expand the attack space, facilitating the discovery of AEs with superior transferability and lower perceptibility. We provide a theoretical justification for this expansion in the appendix's analysis section.


	\item 
	We evaluate AIR on multiple state-of-the-art subject-agnostic FS models across different frameworks, including GAN-based and diffusion-based models. To our best knowledge, this is the first study to assess transferable attack across GAN-based and diffusion-based FS frameworks. Our experimental results demonstrate the superiority of AIR in both attack performance and image quality, advancing the state-of-the-art in adversarial attacks on face swapping systems.
	
\end{enumerate}

\section{Related Work}

\textbf{Face Swapping Methods based on Deep Learning}. Deep FS models can be divided into subject-specific models and subject-agnostic models. The subject-specific models \cite{naruniec2020high} can only swap predefined source faces or require further training to swap other faces, restricting their practical applicability in real-world scenarios. In contrast, the subject-agnostic models \cite{li2020advancing,chen2020simswap,zhu2021one, kim2022diffface, zhao2023diffswap} overcome this drawback. Some FS models can achieve high-fidelity results on pairs of faces not contained in training sets. These models typically operate by disentangling facial identities from attributes such as pose, expression, and hairstyle using an encoder architecture. However, the versatility of subject-agnostic models poses significant risks, as they can be misused to swap any face onto any target, potentially leading to malicious applications. Our study therefore focuses on generating adversarial examples to protect source faces against these subject-agnostic models.

\textbf{Adversarial Attacks Against Deepfake}. In 2020, Ruiz \cite{ruiz2020disrupting} first used adversarial examples (AEs) to disrupt deepfake networks that were designed to manipulate attributes such as facial expression and hair color (Disrupt Attack). 
Dong and Xie \cite{dong2021visually} proposed three attacks on FS models: the transfer adversarial attack (TAA), which generates universal noise, and the Siamese adversarial attack (SAA) and latent Siamese attack (LSA), which generate image-specific noise. Huang \cite{huang2021cmua} introduced Cross-Model Universal Adversarial watermark (CMUA), a fusion of adversarial perturbations from attacks on multiple models. Concurrently, some researchers \cite{aneja2022tafim, zhang2024dual} used generative models to create AEs against FS. These methods are studied under a white-box setting assuming perfect knowledge of target models. Besides, some \cite{huang2021initiative, dong2023restricted} generated transferable adversarial examples using generators, focusing on face attribute editing and subject-specific FS models. However, a subject-specific FS model is limited to swapping portraits to a specific identity, restricting their applicability. Existing methods do not address subject-agnostic FS models, highlighting a need for solutions to protect online face images from such manipulations. Our study fills this gap with AIR, which effectively prevents manipulation by unseen subject-agnostic FS models.


\textbf{Adversarial Attacks Under Different Threat Models}. Adversarial attacks have evolved under various threat models in recent years. The additive threat model, where perturbations are added to individual input pixels with an $L_p$ norm constraints, has been widely used in attacks. 
However, this model often struggles with imperceptibility. Alternative threat models have emerged to address this issue. The spatial threat model alters pixel locations through operations like rotation and translation, as demonstrated by \cite{xiao2018spatially}. Laidlaw and Feizi \cite{laidlaw2019functional} introduced the functional threat model, which applies a single function to change pixel colors in a 3-dimensional color space. While this approach improves imperceptibility, it suffers from limited attack effectiveness due to a constrained attack space. To overcome these limitations, we propose a novel approach for attacking face swapping models that combines a relighting function with additive perturbations. By integrating these approaches, we aim to achieve a balance between effectiveness and imperceptibility of adversarial examples in a extended attack space.

\section{Methodology}

To safeguard face images without compromising their quality, we propose AIR, a novel approach that generates adversarial images that appear natural and preserve the original image's key content. 
AIR integrates two novel attack components as shown in Fig. \ref{Fig.Schematic}: Additive Identity Attack (AIA) and Relighting Functional Attack (RFA). AIA, operating under the additive threat model, incorporates a new Adaptive Translation-Invariant (ATI) operation, enhancing transferability without introducing visible noise textures. RFA, based on the functional threat model, employs a mapping function to alter lighting intensity from various directions while preserving crucial facial attributes such as identity and expression. This dual-pronged approach significantly expands the attack space, facilitating the discovery of adversarial perturbations with improved transferability and reduced visibility. A theoretical justification for this expansion is provided in the appendix's analysis section. To overcome the challenges of obtaining surrogate FS models and address the limited transferability due to model variations, AIR targets the identity extraction module in FS models. It generates AEs by leveraging an ensemble of face recognition models, thereby enhancing robustness and effectiveness across different FS systems.

\begin{figure}[t]
	\centering
	\includegraphics[width=\columnwidth]{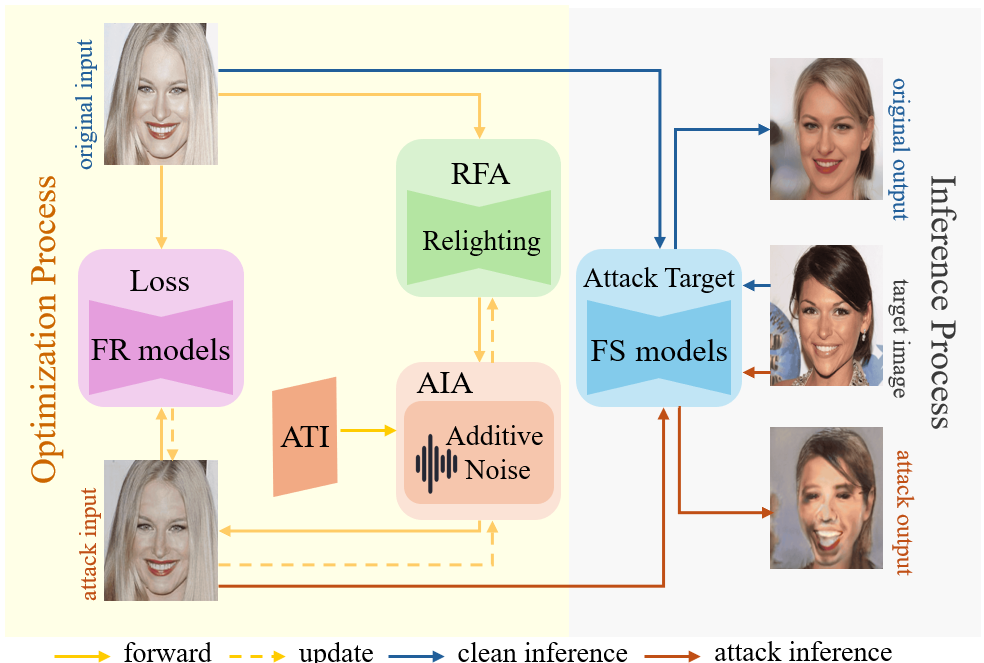}   
	\caption{The schematic diagram of AIR.}
	\label{Fig.Schematic} 
\vspace{-0.6cm}
\end{figure}


\subsection{Additive Identity Attack (AIA)}
Let $\boldsymbol{x}=(x_1,\cdots,x_{n})$ be a source image and $x_j$ be the $j^{th}$ pixel of a source image. The additive identity attack perturbs each $x_j$ by $\delta_j$. The perturbed image can be denoted as: $\boldsymbol{x'}= \boldsymbol{x}+\boldsymbol{\delta}$, where $\boldsymbol{\delta}=(\delta_1,\cdots,\delta_{n})$. $\boldsymbol{\delta}$ is generated by solving the following optimization problem:
\begin{equation}
\label{eqn:AIA_objective_function}
\centering
    {                 
		\begin{aligned}
		\mathop{\min}_{\boldsymbol{\delta}} &\sum_{i=1}^{m} \mathrm{cos} (\boldsymbol{f}_{ID_i}(\boldsymbol{x}),\boldsymbol{f}_{ID_i}
		(\boldsymbol{x}+\boldsymbol{\delta})) \\
		&s.t. \quad \| \boldsymbol{\delta} \|_\infty \leq \epsilon ,
	\end{aligned}
	}
\end{equation}
where $\boldsymbol{f}_{ID_i}(\boldsymbol{x})$ is a feature vector from the $i^{th}$ face recognition network describing the identity of $\boldsymbol{x}$ and $m$ is the total number of face recognition networks used in the ensemble. $\mathrm{cos}(\cdot,\cdot)$ denotes the cosine similarity of two vectors. 
The $l_\infty$ norm of the perturbation vector is constrained to some small value, $\epsilon$.

\subsection{Adaptive Translation-Invariant Operation}
To improve the transferability of AIA, we propose an Adaptive Translation-Invariant operation. \footnote{Technically, ATI is not translation-invariant. We keep the term translation-invariant in its name as it is modified from TI.} This novel approach builds upon the Translation-Invariant method (TI) \cite{dong2019evading}, a widely used technique for improving attack transferability. TI applies convolution to the backpropagation gradient of input with a predefined kernel, approximating the optimization over an ensemble of translated inputs. TI operations can be denoted as $\boldsymbol{W}* \nabla_{\boldsymbol{x}} J,$
where $J$ is an attack objective function and $\boldsymbol{W}$ is a predefined kernel. $\boldsymbol{W}$ could be a uniform, linear, or Gaussian kernel. While TI effectively improves transferability, it often introduces noticeable textures in the resulting images as shown in Fig. \ref{fig.ti}. 



During convolution, the computation on each sliding window is exactly the same, resulting in the same adversarial textures effect everywhere in the image, while varying in visibility depending on the image region. Smooth areas, characterized by small absolute gradient values, tend to render these textures more noticeable. To mitigate this issue, we introduce an adaptive kernel $\boldsymbol{W}_{Ada}$, which dynamically adjusts its values based on local image gradients. In areas with high gradient (rough regions), $\boldsymbol{W}_{Ada}$ approaches a uniform kernel, maintaining high transferability while allowing for more texture. Conversely, in smooth regions, the weights of $\boldsymbol{W}_{Ada}$ decay rapidly from the center to the edge of the kernel matrix, which introduces less noise textures. Specifically, for the element in the $i^{th}$ row and $j^{th}$ column of $\nabla_{\boldsymbol{x}} J$, the corresponding adaptive kernel matrix is denoted as $\boldsymbol{W}_{ij}$ of size $(2K+1)\times(2K+1)$. The value of an entry whose coordinate is $(r,c)$ in $\boldsymbol{W}_{ij}$ is defined as:
\begin{equation}
	{
		\begin{aligned}
		\widetilde w_{ij}(r,c)=&\frac{|LG_{ij}|\cdot D_{Che}((r,c),(0,0))+1}{ D_{Che}((r,c),(0,0))+1} \\
		& -K \le r,c <K \, ,
		\end{aligned}
		}
\end{equation}
where $D_{Che}((r,c),(0,0))$ is the Chebyshev distance between point $(r,c)$ and point $(0,0)$. $\boldsymbol{LG}$ is the normalized image gradient computed by 
\begin{equation*}
 {
 \boldsymbol{LG} = \frac{\boldsymbol{L}*\boldsymbol{x}}{\mathrm{max}(|\boldsymbol{L}*\boldsymbol{x}|)} \, ,    
 }
\end{equation*}
 where $\mathrm{max}(\cdot)$ returns the maximum of a matrix and $\boldsymbol{L}$ is a kernel modified from Laplacian filter as shown in Fig. \ref{fig:filter} in the appendix. $LG_{ij}$ is the element in the $i^{th}$ row and the $j^{th}$ column of $\boldsymbol{LG}$. The final weight of $\boldsymbol{W}_{ij}$ is normalized by 
 \begin{equation*}
  {
	w_{ij}(r,c)=\frac{\widetilde w_{ij}(r,c)}{\sum_{r,c} \widetilde w_{i,j}(r,c)}.    
	} 
 \end{equation*}
 
The objective functions of AIA with and without ATI are the same, but for AIA with ATI, $\nabla_{\boldsymbol{x}} J$ will be further processed by ATI in each iteration. It can be written as:
\begin{equation}
\centering
    {
		\begin{aligned}
		g_{\boldsymbol{x}} = \boldsymbol{W}_{Ada}\widetilde{*} \,   \nabla_{\boldsymbol{x}} J ,
		\end{aligned}
	}
\end{equation}
where $\boldsymbol{g}_{\boldsymbol{x}}$ is the processed gradient to update adversarial noise. $\widetilde{*}$ is an adaptive convolution that operates like regular convolution but with a variable kernel $\boldsymbol{W}_{Ada}$. The kernel $\boldsymbol{W}_{Ada}$ changes based on the convolution center, such that $\boldsymbol{W}_{Ada}$ is equals to $\boldsymbol{W}_{ij}$ when the convolution center is at the $i^{th}$ row and the $j^{th}$ column of $\boldsymbol{x}$.

\subsection{Relighting Functional Attack (RFA)}

While the AIA with ATI operation improves transferability, its effectiveness is constrained by the magnitude $\epsilon$, which must remain small to ensure imperceptibility. To further enhance the attack without increasing $\epsilon$, we introduce the relighting functional attack (RFA). In RFA, we leverage the state-of-the-art relighting model DPR \cite{zhou2019deep} as the perturbation function which excels at reilluminating high-resolution portrait images without introducing perceptible artifacts, allowing us to create natural-looking perturbations that are effective yet visually subtle. 
To generate AEs in RFA, we optimize a nine-dimensional vector ($\boldsymbol{V}_{SH}$) that contains the coefficients of the first nine Spherical Harmonic (SH) bases controlling different lighting conditions. 
As the lowest order of SH bases, $V_{SH_1}$ and $V_{SH_2}$ control the overall lighting intensity across the image. In RFA, these elements are fixed to $1$ and $0$, respectively to avoid poor illumination. 
To achieve natural reillumination, a penalty term $p(\cdot)$ is applied to $\boldsymbol{V}_{SH}$ when any of the remaining seven elements exceed a predefined interval ${\mathcal V}_{SH}$, which is defined by the mean $\mu_i$ and the standard deviation $\sigma_i$ of $V_{SH_i}$ for each dimension $i$ in training data. 
For the $i^{th}$ dimension, if $V_{SH_i} \notin {\mathcal V}_{SH_i}$, the penalty term, $p_i(V_{SH_i})= 1-e^{-\frac{1}{2}(\frac{V_{SH_i}-\mu_i}{\sigma_i})^2} \ $, is applied during optimization. AIR applies RFA and AIA with ATI sequentially, where the relighting parameters $\boldsymbol{V}_{SH}$ and $\boldsymbol{\delta}$ are optimized by the following objective function:
\begin{equation}
	{\begin{aligned}
		\mathop{\min}_{\ \boldsymbol{\delta}, \boldsymbol{V}_{SH}}  &\sum_{i=1}^{m}
		\mathrm{cos}(\boldsymbol{f}_{ID_i}(\boldsymbol{x}), 
		\boldsymbol{f}_{ID_i}({\boldsymbol{f}_{RL}(\boldsymbol{x},\boldsymbol{V}_{SH})}+\boldsymbol{\delta})) 
		+\\ 
		&\beta \sum_{j=3}^{9}p_j(V_{SH_j})\mathbb{1}_{{\mathcal V}^C_{SH_j}}\\  
		&s.t. \quad \|  \boldsymbol{\delta} \|_\infty \leq \epsilon  \ ,\ V_{SH_1}=1 , \ V_{SH_2}=0.
		\label{eq1}
	\end{aligned}}
\end{equation}
where $\boldsymbol{f}_{RL}(\cdot, \cdot)$ is a relighting perturbation function and $\beta$ is a parameter controlling the contribution of the penalty term. ${\mathcal V}^C_{SH_j}$ is the complement of ${\mathcal V}_{SH_j}$ and $\mathbb{1}_{{\mathcal V}^C_{SH_j}}$ is the indicator function of ${\mathcal V}^C_{SH_j}$.




\section{Experimental Results}
\label{sec:Experimental_Results}

In this section, we evaluate AIR on three publicly available subject-agnostic GAN-based FS models, including FaceShifter \cite{li2020advancing}, SimSwap \cite{chen2020simswap}, and MegaGAN \cite{zhu2021one}, as well as two diffusion-based models, including Diffface \cite{kim2022diffface} and Diffswap \cite{zhao2023diffswap}.  The CelebA-HQ \cite{karras2017progressive} dataset is selected for the evaluation as all the target models can generate reasonably good swapping results on it. 1000 images with different identities are randomly sampled from CelebA-HQ and taken as the source images. All the attacks use the same 1000 source images to generate the corresponding AEs. As face recognition networks serve as surrogate models, AIR does not require target images during training. In testing, another 1000 images randomly sampled from CelebA-HQ are used as target images. 

\subsection{Metrics}

To measure the identity change after attacks, attack success rate, cosine similarity, retrieval rank, and AWS Similarity are utilized. An Arcface \cite{deng2018arcface} is employed to compute the identity vectors of source images ($\boldsymbol{z}_s$), original outputs ($\boldsymbol{z}_o$) and attack outputs ($\boldsymbol{z}_{ao}$).\footnote{The Arcface utilizes a ResNet 2060 backbone trained on the MS1MV3 dataset. Note that different face recognition networks are used for evaluation, training, and as identity extractors in target FS models.} The difference between $\mathrm{cos(\boldsymbol{z}_s,\boldsymbol{z}_o)}$ and $\mathrm{cos(\boldsymbol{z}_s,\boldsymbol{z}_{ao})}$ indicates identity change. We also present the retrieval results using the original and attack outputs as searching images respectively. The 1000 source images used for the attack consist of the retrieval pool sorted by the cosine similarity between their identity vectors and the searching image's identity vector. Retrieval rank (RR) indicates the corresponding source image's rank in each retrieval. Attack success rate (ASR) is the number of successful attacks (RR $\neq 0$) divided by the total number of retrievals \cite{dong2023restricted}. Note that the rank number varies from 0 to 999.
The increase in $\mathrm{RR}$ and $\mathrm{ASR}$ from the original output to the attack output signifies the efficacy of AIR. Additionally, we include AWS Similarity (AWSS), a commercial facial recognition service as a metric. The value of AWSS varies between 0 to 100. Similarly, we measure the drop in the AWSS between the original source face input and the swapped output before and after attacks. Notably, mechanism details of AWS Rekognition Service are undisclosed, rendering it a practical black-box setting for the proposed attack.

To measure the content change before and after attacks, LPIPS and EMSE are employed. We compare the LPIPS between the original output and the attack output. The LPIPS between the original output and its JPEG-50 image is used as a reference. EMSE estimates the face contour changes of swapped results before and after the attack. It is calculated as the mean square error between edges detected in facial regions of original and attack outputs using the default parameters of the Canny edge detector from Kornia \cite{eriba2019kornia}.

\begin{table}[t]
    \caption{The attack results of the eight baselines and AIR on target models. The results of ORI are from the original output without attack. The LPIPS in the ORI row is the LPIPS between the original output and its JPEG-50 image for reference. The arrow direction indicates more effective attack.
    } 

        \centering

        \setlength{\tabcolsep}{0.30mm}
        
            \centering
            \begin{tabular}{llcccccc}
        \toprule
      & Method & $\mathrm{ASR_{}}$   $\uparrow$   & COS    $\downarrow$        & $\mathrm{RR_{}}$   $\uparrow$            & LPIPS $\uparrow$ & EMSE $\uparrow$ & AWSS $\downarrow$\\
    \midrule
    \multirow{10}{*}{\rotatebox[origin=l]{90}{MegaGAN}} & ORI    & 0.016  & 0.452  & 0.018     & 0.001  & 0.000  & 69.653 \\
    & PG     &0.021 &   0.441 	&0.080 	 	&0.003 	&0.047    &   64.533    \\
    & DA  & 0.375  & 0.287  & 12.523   & 0.007  & 0.058 & 36.888 \\ 
    & TAA  & 0.018 & 0.449  & 0.020    & 0.001  & 0.018 & 67.734  \\
    & LSA    & 0.096  & 0.395  & 4.079   & 0.003  & 0.044 & 60.933\\
    & SAA   & 0.105  & 0.388  & 1.506    & 0.003  & 0.044 & 58.576  \\
    & CMUA  & 0.157 & 0.361  & 14.957    & 0.008  & 0.056 & 53.767  \\
    & TCA-GAN &  0.017 & 0.449  & 1.069    & 0.001  &  0.028  &  59.924 \\
    & TAFIM & 0.013 &  0.454& 0.013&   0.000& 0.015&  59.639\\
    & \textbf{AIR }  & \textbf{0.796} & \textbf{0.168} & \textbf{96.294}  & \textbf{0.009} & \textbf{0.059} & \textbf{13.689}  \\ 
    
    \midrule

    \multirow{10}{*}{\rotatebox[origin=l]{90}{DiffSwap}} &  ORI    & 0.448   & 0.294     & 1.369         & 0.000 & 0.000  & 40.505     \\
    &  PG     & 0.510 &   0.281 	& 6.371 	 	& 0.011 	& 0.056  &  42.824     \\
    & DA  & 0.794 & 0.197    & 39.091        & 0.012    & 0.056 &  28.013    \\ 
    & TAA  & 0.545 & 0.278    & 5.989        & 0.011    & 0.056 & 43.713    \\
    & LSA   & 0.671  & 0.239    & 17.690       & 0.011    & 0.056 &  36.583    \\
    & SAA   & 0.706 & 0.239    & 16.649        & 0.011    & 0.056 & 37.804    \\
    & CMUA   & 0.689 & 0.235    & 20.713       & 0.012    & 0.056  & 32.092    \\
    & TCA-GAN & 0.564  & 0.276    &  5.968       &  0.011    & 0.056  & 45.079    \\
    & TAFIM  &  0.538 &  0.281    &  5.561       &   0.011    & 0.056 & 46.445   \\
    & \textbf{AIR } & \textbf{0.970    }  & \textbf{0.101} & \textbf{193.520}  & \textbf{0.013    } & \textbf{0.056    } & \textbf{7.354}    \\ 

    \bottomrule
    \end{tabular}%
        
        \label{tb:compare}
        \vspace{-0.5cm}
    \end{table}

\subsection{Comparison with Baselines}
In this section, we compare AIR with eight state-of-the-art baselines in a black-box setting: TAFIM \cite{aneja2022tafim}, PG \cite{huang2021initiative}, Disrupt Attack (DA) \cite{ruiz2020disrupting}, CMUA \cite{huang2021cmua}, TAA, SA, LSA \cite{dong2021visually}, and TCA-GAN \cite{dong2023restricted}. 
For DA and CMUA, we switch their target models to FS networks while maintaining other settings. 
In baseline attacks, another 1000 images are randomly selected from CelebA-HQ as target images during training of AEs. For AIR, eight open-source pre-trained face recognition models are employed as surrogate models. \footnote{They are composed of Arcfaces with four different backbones (ResNet 18, ResNet 34, ResNet 50, and ResNet 100) trained under two datasets (MS1MV3 and Glint360K) separately.} $\beta$ is set to 1 and $\epsilon$ is set to 0.02 for AIR. For baselines, we adhere to their original settings and employ $\epsilon= 0.05$ to ensure their effectiveness. ATI is employed in the baselines to further improve their transferability. More details about setting of baselines are included in the appendix.

The brief results are given in Table \ref{tb:compare}, and the complete results are included in Table \ref{tb:complete_compare_megagan}, \ref{tb:complete_compare_simswap}, \ref{tb:complete_compare_faceshifter}, \ref{tb:complete_compare_diffswap}, and \ref{tb:complete_compare_diffface} of the appendix. 
Fig. \ref{Fig.result_da_air} shows some attack examples against MegaGAN from AIR and DA, which are the best two methods in Table \ref{tb:compare}. More attack examples of AIR against these FS models are given in the appendix. Note that different similarity measurement can lead to varying degrees of change in AEs. For example, works like \cite{laidlaw2019functional} and Advmakeup \cite{yin2021adv} can significantly alter original images in terms of $L_{\infty}$, while still appearing natural to human observers. Similarly, in this study, the difference in illumination between the AE and the original image is considered acceptable within our framework.

Table \ref{tb:compare} shows that AIR performs the best on all the FS models on all the metrics. Even with ATI, all baselines exhibit ASR values below 0.2 on GAN-based FS models, except for DA against MegaGAN, whose maximum ASR is 0.375. 
AWSS values for the baselines range from 32.092 to 91.370, except for DA against DiffSwap, which drops to 28.013. Notably, generator-based attack PG, TAFIM, and TCA-GAN also fail to transfer across FS networks, likely due to overfitting on surrogate models and specific training datasets. These findings suggest the inability of baseline methods to safeguard images against the five FS models in a black-box setting. In contrast, AIR's ASR and AWSS values across the five target models range from 0.699 to 0.970 and 7.080 to 14.877, respectively, indicating effective distortion of image identity output from FS networks. 
The findings presented in Table \ref{tb:compare} underscore AIR's remarkable efficacy in the black-box setting across the state-of-the-art FS models of different architectures. Except for the above-mentioned baselines, we further compare AIR with attacks modified from TI and TAIG \cite{huang2022transferable}, two state-of-the-art transferable attacks against image classification in the appendix, which demonstrates the superiority of AIR.


We further evaluate the robustness of the baseline methods against various basic image operations, including {JPEG compression}, {resizing}, and {rotation}. Our results, detailed in the appendix, indicate that these pre-processing operations have only a minimal impact on the performance of all methods when using ATI. Notably, even with these operations, AIR consistently outperforms the other methods. Due to space constraints, additional details are provided in the appendix.


\begin{figure} 
        \centering 
        \includegraphics[width=\columnwidth]{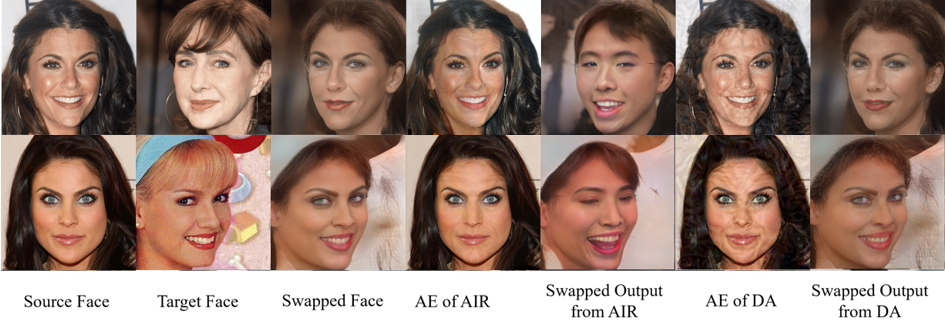} 
     \caption{Face swapping result of MegaGAN before and after AIR and DA attack.} 
        \label{Fig.result_da_air}
     \vspace{-0.3cm}
    \end{figure}

\subsection{Image Quality Evaluation}    
To further validate the image quality of AEs, we conduct a user study with approval from the Institutional Review Board. The study evaluates whether adversarial perturbations generated by the top three methods (AIR, DA, and CMUA) are noticeable and impact image quality. Participants rate the artifacts on the images on a scale of 1 to 5, where a higher score indicates more noticeable artifacts. More details of the user study are included in the appendix. We gather and compare the ratings from 72 participants, finding that 61.1\% and 74.1\% of the ratings for CMUA and DA, respectively, are 4 or 5. In contrast, only 34.5\% of AIR's ratings are 4 or 5, which are significantly lower than the other two methods. Notably, even original images have 7.4\% of ratings of 4 and 5. These results suggest that AIR produces less noticeable adversarial perturbations than DA and CMUA, with minimal impact on image quality. 
Besides subjective evaluation, we also assess the image quality objectively with metrics including CLIP-IQA \cite{wang2023exploring} and Total Variation (TV). Higher CLIP-IQA and lower TV indicate better image quality. CMUA, DA, and AIR achieve 156.3, 61.0, and \textbf{47.8} for TV and 0.37, 0.22 and \textbf{0.54} for CLIP-IQA. The results show that AIR also provides higher image quality, aligning with our user study.


\subsection{Ablation Study}

\textbf{The influence of $\beta$.} The user study indicates that modifying the illumination is generally acceptable. For users who prefer to retain the original illumination as much as possible, they can adjust the parameter $\beta$ to control illumination. Fig. \ref{fig_relighting_beta} shows AEs of AIR with different $\beta$. Experiments with $\beta=1,2$ and $10$ are detailed in Table \ref{tb:ablation}. In an extreme case, AIR without RFA, (i.e., AIA) can still outperform the baselines as shown in Table \ref{tb:compare} and Table \ref{tb:ablation}.


\begin{figure}[t]
         \centering
         \includegraphics[width=\columnwidth]{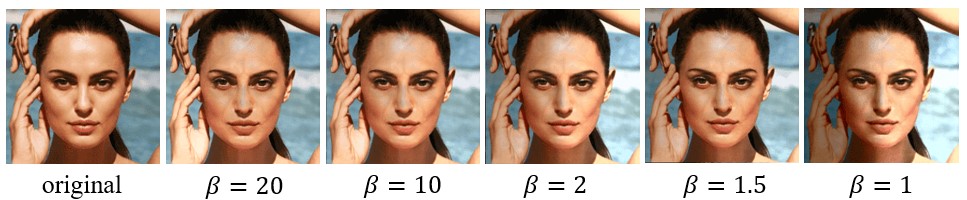}
    \caption{AEs of AIR with different $\beta$ for natural relighting.}
    \label{fig_relighting_beta}
    \vspace{-0.5cm}
\end{figure}

\begin{figure}[t]
    \centering
    \includegraphics[width=\columnwidth]{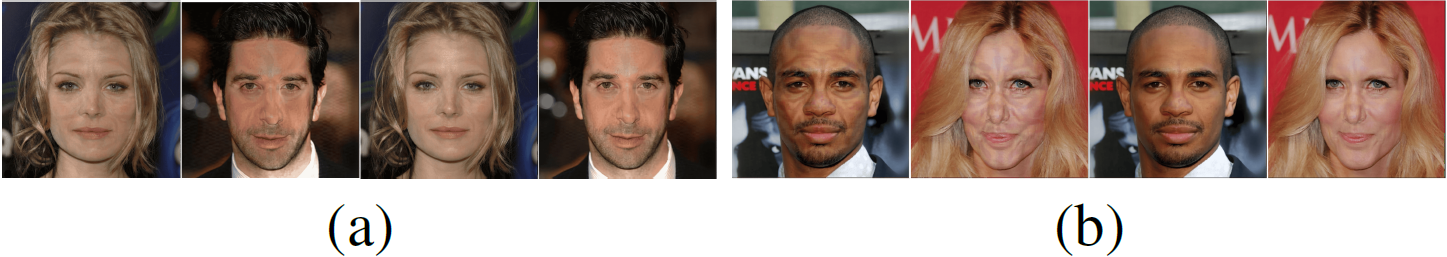}
\caption{Adversarial examples generated by AIR with TI and ATI for (a) $\epsilon=0.02$ and (b) $\epsilon=0.05$. The first two images in each set are from AIR with TI, and the last two are from AIR with ATI. (Please zoom in for a closer view of the details.)}
 \label{Fig:ATI}
 \vspace{-0.3cm}    
\end{figure}

\begin{table}[t]
    \caption{Summary of ablation study results on MegaGAN. AIR without ATI is denoted as $\mathrm{AIR_{woATI}}$, and with surrogate models including FS networks as $\mathrm{AIR_{FS}}$. $\mathrm{AIR_{woATI}}$, $\mathrm{AIR_{FS}}$ and ATI are tested under $\epsilon=0.02$.
    AIA with $\epsilon=0.02$ and $\epsilon=0.05$ are denoted as $\mathrm{AIA_{0.02}}$ and $\mathrm{AIA_{0.05}}$, respectively.}
    \centering
            \setlength{\tabcolsep}{0.4mm}
     \begin{tabular}{lcccccc}
\toprule
Method & $\mathrm{ASR_{}}$   $\uparrow$ & COS    $\downarrow$        & $\mathrm{RR_{}}$   $\uparrow$             & LPIPS $\uparrow$ & EMSE $\uparrow$ & AWSS $\downarrow$\\
\midrule
ORI  & 0.016  & 0.452  & 0.018  & 0.001 & 0.000 & 69.653   \\
$\mathrm{AIA_{0.02}}$  & 0.444 & 0.286 & 9.606    & 0.004 & 0.051 & 35.108  \\
$\mathrm{AIA_{0.05}}$   & 0.628 & 0.231 & 30.508  & 0.006 & 0.054 & 24.204   \\
RFA         & 0.536  & 0.273 & 62.596   & 0.009 & 0.048 & 34.736  \\
$\mathrm{AIR_{woATI}}$   & 0.163 & 0.371 & 11.991   & 0.005 & 0.045 & 55.641   \\
$\mathrm{AIR_{FS}}$    & 0.502 & 0.260 & 24.470   & 0.006 & 0.054 & 32.313   \\
$\mathrm{AIR_{\beta=10}}$        & 0.665  &0.215 & 40.400   & 0.007 & 0.056 & 21.397 \\
$\mathrm{AIR_{\beta=2}}$      & 0.765  & 0.185 & 67.825   & 0.008 & 0.058 & 15.983 \\
\textbf{$\mathrm{AIR_{\beta=1}}$}  & \textbf{0.796}& \textbf{0.168} & \textbf{96.294}  & \textbf{0.009} & \textbf{0.059} & \textbf{13.689}  \\

\bottomrule
\end{tabular}%

    \label{tb:ablation}
    \vspace{-0.5cm}
\end{table}

\textbf{Comparison between AIA and RFA with AIR.} In this evaluation, RFA and AIA are performed separately with the setting used in AIR. In AIA, we set $\epsilon$ to 0.02 and 0.05 to investigate the influence of the additive noise. In Table \ref{tb:ablation}, it shows that AIR performs much better than RFA and AIA. Increasing $\epsilon$ from 0.02 to 0.05 significantly improves the attack of AIA but with a more perceptible noise level. However, it is still weaker than AIR. Some AEs generated by AIA under different $\epsilon$ are given in the appendix.

\textbf{The effectiveness of ATI}. In this evaluation, we compare the attack results of AIR with and without ATI under $\epsilon=0.02$. As shown in Table \ref{tb:ablation}, ATI effectively improves the transferability of AIR. Interestingly, RFA outperforms AIR without ATI because RFA's attack space is smoother; AIR without ATI is more prone to local optima, leading to reduced performance. We also compare the AEs generated by AIR with TI and ATI. As shown in Fig. \ref{Fig:ATI}, AEs with TI exhibit noticeable noise textures, particularly at $\epsilon=0.05$, while ATI significantly weakens these textures.  
Table \ref{tb:ablation} and Fig. \ref{Fig:ATI} demonstrate that ATI can help improve the transferability of the attacks while avoiding obvious noise textures caused by the original TI.

\textbf{Incorporating face swapping networks into surrogate models}. In this study, we insert FS networks in the surrogate models. A source image and a training target image are fed into the FS models first. Then the swapped result is taken as the input of the face recognition networks used in AIR. The black-box attack is performed by minimizing the cosine similarity between the original swapped output and the attacked swapped output. The same set of images used in the baseline methods is employed in this evaluation. Table \ref{tb:ablation} shows that inserting FS models in the surrogate model does not improve the transferability to other FS models.

\textbf{More experiments are provided in the appendix\footnote{The appendix is available in the arXiv version of this paper.}, including ablation studies on different face recognition models, different datasets, etc.}
\section{Conclusion}
We introduce AIR, which bypasses the limitations of using FS models in training. AIR integrates AIA and RFA, both targeting the identity extraction process in subject-agnostic FS models. The proposed ATI further enhances AIA's transferability while maximizing imperceptibility. By employing these dual strategies, AIR expands the attack space, achieving a higher success rate in transferable attack on FS models without introducing noticeable noise.

\section*{Acknowledgments}
This work is conducted at ROSE @ NTU, Interdisciplinary Graduate Programme, Nanyang Technological University, Singapore, supported by NTU Internal Funding - Accelerating Creativity and Excellence (NTU-ACE2020-03).

\bibliographystyle{IEEEbib}
\bibliography{icme2025_DF_defense}

\clearpage
\appendix
\label{sec:SM}

\subsection{Analysis of a Combined Threat Model}
\label{sec:Analysis}

In AIR, the search for AEs is conducted under a combined threat model that integrates both the additive and functional threat models. This combination results in an effective expansion of the attack space, thereby enabling the generation of adversarial perturbations that exhibit high transferability and low visibility. In this subsection, we provide a theoretical analysis for this design.

Let the entire image space be $\mathcal{Z} = \{\boldsymbol{X}=(x_1,\ldots,x_n)| \, \forall x_i \in [0,1]\}$.
Let $\boldsymbol{X}_0$ be an image in $\mathcal{Z}$ and the additive noise space of $\boldsymbol{X}_0$ be $  \mathcal{N}_{L_p,\epsilon}(\boldsymbol{X}_0)=\{\boldsymbol{\delta} | \|\boldsymbol{\delta}\|_p \leq \epsilon, \boldsymbol{X}_0+\boldsymbol{\delta} \in \mathcal{Z}\}.$
For the sake of convenience, let $\mathcal{Z}_{int}$ be the interior of $\mathcal{Z}$ and $\partial \mathcal{Z}$ be the boundary of $\mathcal{Z}$. The same notations are applied to other sets in this section. We also denote the functional attack space of $\boldsymbol{X}_0$ as $\mathcal{S}(\boldsymbol{X}_0)=\{\boldsymbol{X}|\boldsymbol{X}=\boldsymbol{f}_{a} (\boldsymbol{X}_0,\boldsymbol{\theta}),\boldsymbol{X} \in \mathcal{Z}, \boldsymbol{\theta} \in \psi\},$
where $\boldsymbol{f}_{a}$ is a function employed in the functional attack, $\boldsymbol{\theta}$ is its parameter and $\psi$ is the set of valid parameters. In RFA, $\boldsymbol{f}_a$ is the relighting network and  $\boldsymbol{\theta}$ is $\boldsymbol{V}_{SH}$. It is assumed that $\boldsymbol{X}_0 \in \mathcal S (\boldsymbol{X}_0)$. In the case of RFA, it means no relighting is applied on $\boldsymbol{X}_0$. Moreover, we define the additive attack space of $\boldsymbol{X}_0$ as $\mathcal{B}_{L_p,\epsilon}( \boldsymbol{X}_0)=\{\boldsymbol{X}|\boldsymbol{X}=\boldsymbol{X}_0+\boldsymbol{\delta}, \boldsymbol{X} \in \mathcal{Z}, \boldsymbol{\delta} \in \mathcal{N}_{L_p,\epsilon}(\boldsymbol{X}_0)\},$ 
and the noise space of the functional attack as $ \mathcal{N}_{S}(\boldsymbol{X}_0)=\{\boldsymbol{\rho}|\boldsymbol{\rho} = \boldsymbol{X}-\boldsymbol{X}_0, \boldsymbol{X} \in \mathcal{S}(\boldsymbol{X}_0)\}.$
Note that $\boldsymbol{\rho}$ is the difference between output of $\boldsymbol{f}_{a}$ and $\boldsymbol{X}_0$. Similarly, we denote $ \Omega(\boldsymbol{X}_0)=\{\boldsymbol{W}|\boldsymbol{W}=\boldsymbol{\delta}+\boldsymbol{\rho}+\boldsymbol{X}_0, \boldsymbol{\delta} \in \mathcal{N}_{L_p,\epsilon}( \boldsymbol{X}_0), \boldsymbol{\rho} \in \mathcal{N}_\mathcal{S}( \boldsymbol{X}_0), \boldsymbol{W}\in \mathcal{Z}\}$
as the combined attack space. 
\begin{mydef}
     Given a subset $\Phi$ of $\mathcal{Z}$, if $\Phi$ is a closed set containing $\boldsymbol{X}_0$ and there exist a unit vector $\boldsymbol{v}$ and a point $\boldsymbol{X} \in \partial \Phi$ such that $\frac{(\boldsymbol{X}-\boldsymbol{X}_0)}{\| \boldsymbol{X}-\boldsymbol{X}_0\|}= \boldsymbol{v}$ and $\boldsymbol{X}+\alpha \boldsymbol{v} \in \Phi^C, \forall \  \alpha > 0$ when $\boldsymbol{X}+\alpha \boldsymbol{v} \in \mathcal{Z}$, then $\Phi$ is called a some-way-out set of $\boldsymbol{X}_0$.
\end{mydef}
\begin{mydef}
    Given a subset $\Phi$ of $\mathcal{Z}$, if $\Phi$ is a closed set containing $\boldsymbol{X}_0$ and given any unit vector $ \boldsymbol{v}$, there exists a point $\boldsymbol{X} \in \partial \Phi$  such that $\frac{(\boldsymbol{X}-\boldsymbol{X}_0)}{\| \boldsymbol{X}-\boldsymbol{X}_0\|}= \boldsymbol{v}$ and $\boldsymbol{X}+\alpha \boldsymbol{v} \in \Phi^C, \forall \  \alpha > 0$ when $\boldsymbol{X}+\alpha \boldsymbol{v} \in \mathcal{Z}$, then  $\Phi$ is called an any-way-out set of $\boldsymbol{X}_0$.
\end{mydef} 

\begin{theorem}
\label{theorem1}
If $\boldsymbol{X}_0 \in \mathcal{Z}_{int}$, $\epsilon$ is small enough such that $\mathcal{B}_{L_p,\epsilon} (\boldsymbol{X}_0) \in \mathcal{Z}_{int}$ and $\mathcal{S}(\boldsymbol{X}_0)$ is a some-way-out set of $\boldsymbol{X}_0$, then $\mathcal{B}_{L_p,\epsilon}(\boldsymbol{X}_0)\cup\mathcal{S}(\boldsymbol{X}_0)$ is a proper subset of $\Omega(\boldsymbol{X}_0)$ 
\end{theorem}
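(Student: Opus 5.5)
The proof has two parts: the inclusion $\mathcal{B}_{L_p,\epsilon}(\boldsymbol{X}_0)\cup\mathcal{S}(\boldsymbol{X}_0)\subseteq\Omega(\boldsymbol{X}_0)$, and the construction of a point of $\Omega(\boldsymbol{X}_0)$ lying outside both sets.

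\textbf{Inclusion.} Since $\boldsymbol{X}_0\in\mathcal{S}(\boldsymbol{X}_0)$ by assumption, $\boldsymbol{0}\in\mathcal{N}_{\mathcal{S}}(\boldsymbol{X}_0)$. Also $\boldsymbol{X}_0\in\mathcal{Z}$, so $\boldsymbol{0}\in\mathcal{N}_{L_p,\epsilon}(\boldsymbol{X}_0)$.
\begin{itemize}
\item Any $\boldsymbol{X}=\boldsymbol{X}_0+\boldsymbol{\delta}\in\mathcal{B}_{L_p,\epsilon}(\boldsymbol{X}_0)$ can be written as $\boldsymbol{\delta}+\boldsymbol{0}+\boldsymbol{X}_0$, which lies in $\mathcal{Z}$, hence in $\Omega(\boldsymbol{X}_0)$.
\item Any $\boldsymbol{X}\in\mathcal{S}(\boldsymbol{X}_0)$ can be written as $\boldsymbol{0}+(\boldsymbol{X}-\boldsymbol{X}_0)+\boldsymbol{X}_0$, hence also lies in $\Omega(\boldsymbol{X}_0)$.
\end{itemize}

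\textbf{Construction of the extra point.} Use the some-way-out hypothesis. It gives a unit vector $\boldsymbol{v}$ and a point $\boldsymbol{X}^*\in\partial\mathcal{S}(\boldsymbol{X}_0)\subseteq\mathcal{S}(\boldsymbol{X}_0)$ (the set is closed) with $\boldsymbol{X}^*-\boldsymbol{X}_0=t\boldsymbol{v}$ for some $t>0$. Moreover, $\boldsymbol{X}^*+\alpha\boldsymbol{v}\notin\mathcal{S}(\boldsymbol{X}_0)$ for every $\alpha>0$ with $\boldsymbol{X}^*+\alpha\boldsymbol{v}\in\mathcal{Z}$.

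Take $\boldsymbol{\rho}=\boldsymbol{X}^*-\boldsymbol{X}_0$ and $\boldsymbol{\delta}=\alpha\boldsymbol{v}$ with $\alpha>0$ small, so that $\|\alpha\boldsymbol{v}\|_p\le\epsilon$ and $\boldsymbol{W}=\boldsymbol{X}^*+\alpha\boldsymbol{v}\in\mathcal{Z}$. We also need $\boldsymbol{X}_0+\alpha\boldsymbol{v}\in\mathcal{Z}$; this holds because $\boldsymbol{X}_0\in\mathcal{Z}_{int}$ and $\mathcal{B}_{L_p,\epsilon}(\boldsymbol{X}_0)\subset\mathcal{Z}_{int}$, so $\boldsymbol{\delta}\in\mathcal{N}_{L_p,\epsilon}(\boldsymbol{X}_0)$. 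Then $\boldsymbol{W}\in\Omega(\boldsymbol{X}_0)$, and by the some-way-out property $\boldsymbol{W}\notin\mathcal{S}(\boldsymbol{X}_0)$.

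\textbf{Main obstacle.} The remaining claim is $\boldsymbol{W}\notin\mathcal{B}_{L_p,\epsilon}(\boldsymbol{X}_0)$, which needs $\|\boldsymbol{W}-\boldsymbol{X}_0\|_p=(t+\alpha)\|\boldsymbol{v}\|_p>\epsilon$. This is where the choice of $\alpha$ must be made carefully.

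\emph{Case $t\|\boldsymbol{v}\|_p\ge\epsilon$.} Any $\alpha>0$ gives $(t+\alpha)\|\boldsymbol{v}\|_p>\epsilon$, so $\boldsymbol{W}\notin\mathcal{B}_{L_p,\epsilon}(\boldsymbol{X}_0)$.

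\emph{Case $t\|\boldsymbol{v}\|_p<\epsilon$.} Instead choose $\alpha$ so that $(t+\alpha)\|\boldsymbol{v}\|_p$ lies strictly between $\epsilon$ and $\min(2t\|\boldsymbol{v}\|_p,\ t\|\boldsymbol{v}\|_p+\epsilon)$; in particular $\alpha\|\boldsymbol{v}\|_p\le\epsilon$. This interval is nonempty only when $2t\|\boldsymbol{v}\|_p>\epsilon$. Otherwise, set $\alpha=\epsilon/\|\boldsymbol{v}\|_p$ exactly, which gives $(t+\alpha)\|\boldsymbol{v}\|_p=t\|\boldsymbol{v}\|_p+\epsilon>\epsilon$ because $t>0$.

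So in every case take $\alpha=\epsilon/\|\boldsymbol{v}\|_p$. Then $\|\boldsymbol{\delta}\|_p=\epsilon$ and $\|\boldsymbol{W}-\boldsymbol{X}_0\|_p=t\|\boldsymbol{v}\|_p+\epsilon>\epsilon$. It remains to check $\boldsymbol{W}\in\mathcal{Z}$, which may require shrinking $\alpha$ slightly below $\epsilon/\|\boldsymbol{v}\|_p$. Shrinking $\alpha$ to $\epsilon/\|\boldsymbol{v}\|_p-\eta$ with $0<\eta<t$ keeps $\|\boldsymbol{W}-\boldsymbol{X}_0\|_p=t\|\boldsymbol{v}\|_p+\epsilon-\eta\|\boldsymbol{v}\|_p$ strictly above $\epsilon$, so the strict inequality survives.

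The delicate point is whether $\boldsymbol{X}^*$ could lie on $\partial\mathcal{Z}$ with $\boldsymbol{v}$ pointing outward, so that no admissible $\alpha>0$ keeps $\boldsymbol{W}\in\mathcal{Z}$. I would handle this by reading the definition as implicitly providing some $\alpha>0$ with $\boldsymbol{X}^*+\alpha\boldsymbol{v}\in\mathcal{Z}$ (otherwise the exit condition is vacuous). With such an $\alpha$, convexity of $\mathcal{Z}$ lets us shrink it into $(0,\epsilon/\|\boldsymbol{v}\|_p]$.

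Having exhibited $\boldsymbol{W}\in\Omega(\boldsymbol{X}_0)\setminus\bigl(\mathcal{B}_{L_p,\epsilon}(\boldsymbol{X}_0)\cup\mathcal{S}(\boldsymbol{X}_0)\bigr)$, together with the inclusion this shows the containment is proper.
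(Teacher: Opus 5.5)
Your construction is the paper's, and it is correct up to one loose end. $\boldsymbol{\rho}$ is the some-way-out point minus $\boldsymbol{X}_0$, and your choice $\alpha=\epsilon/\|\boldsymbol{v}\|_p$ makes $\boldsymbol{X}_0+\boldsymbol{\delta}$ exactly the point where the ray along $\boldsymbol{v}$ meets $\partial\mathcal{B}_{L_p,\epsilon}(\boldsymbol{X}_0)$, so $\boldsymbol{W}=\boldsymbol{X}+\boldsymbol{Y}-\boldsymbol{X}_0$ as in the paper. Where the paper appeals to the any-way-out property of $\mathcal{B}_{L_p,\epsilon}(\boldsymbol{X}_0)$, you compute $\|\boldsymbol{W}-\boldsymbol{X}_0\|_p=t\|\boldsymbol{v}\|_p+\epsilon$ directly.

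You go beyond the paper by checking $\boldsymbol{W}\in\mathcal{Z}$. Membership in $\Omega(\boldsymbol{X}_0)$ requires this, and the paper never verifies it. Your extra reading of the definition is genuinely necessary. Take $\mathcal{Z}=[0,1]$, $\boldsymbol{X}_0=0.5$, $\epsilon=0.1$ and $\mathcal{S}(\boldsymbol{X}_0)=[0.5,1]$. Then $\mathcal{S}(\boldsymbol{X}_0)$ is some-way-out: its only witness, the point $1$ with $\boldsymbol{v}=1$, exits through $\partial\mathcal{Z}$. Yet $\Omega(\boldsymbol{X}_0)=[0.4,1]=\mathcal{B}_{L_p,\epsilon}(\boldsymbol{X}_0)\cup\mathcal{S}(\boldsymbol{X}_0)$, so the literal statement fails.

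The loose end is in your membership step when $t\|\boldsymbol{v}\|_p<\epsilon$. Your final move takes an admissible $\alpha$ and shrinks it by convexity into $(0,\epsilon/\|\boldsymbol{v}\|_p]$. That can land at or below $\epsilon/\|\boldsymbol{v}\|_p-t$, which puts $\boldsymbol{W}$ back inside $\mathcal{B}_{L_p,\epsilon}(\boldsymbol{X}_0)$. Your remark that shrinking by $\eta<t$ keeps the norm above $\epsilon$ does not explain why such a shrink gives $\boldsymbol{W}\in\mathcal{Z}$.

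Use the hypothesis $\mathcal{B}_{L_p,\epsilon}(\boldsymbol{X}_0)\subset\mathcal{Z}_{int}$ instead. The point $\boldsymbol{E}=\boldsymbol{X}_0+(\epsilon/\|\boldsymbol{v}\|_p)\boldsymbol{v}$ lies in the open set $\mathcal{Z}_{int}$. So for small $\gamma\in(0,t)$, the choice $\alpha=\epsilon/\|\boldsymbol{v}\|_p-t+\gamma$ gives:
\begin{itemize}
\item $\boldsymbol{W}=\boldsymbol{E}+\gamma\boldsymbol{v}\in\mathcal{Z}$;
\item $0<\alpha\le\epsilon/\|\boldsymbol{v}\|_p$;
\item $\|\boldsymbol{W}-\boldsymbol{X}_0\|_p=\epsilon+\gamma\|\boldsymbol{v}\|_p>\epsilon$.
\end{itemize}
In this case $\boldsymbol{X}^*$ already lies in $\mathcal{B}_{L_p,\epsilon}(\boldsymbol{X}_0)\subset\mathcal{Z}_{int}$, so the reinterpreted definition is not needed. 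It matters only when $t\|\boldsymbol{v}\|_p\ge\epsilon$. There every $\alpha>0$ exits the ball, and your convexity shrink to $\min(\alpha_0,\epsilon/\|\boldsymbol{v}\|_p)$ works as written.
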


\begin{proof}
  Selecting $\boldsymbol{\rho}=\boldsymbol{0}$ and $\boldsymbol{W}= \boldsymbol{\delta}+\boldsymbol{X}_0$. Thus, $\mathcal{B}_{L_p,\epsilon} (\boldsymbol{X}_0)$ is a subset of $\Omega(\boldsymbol{X}_0)$. Similarly, selecting 
  $\boldsymbol{\delta}=\boldsymbol{0}, \, \boldsymbol{W}=\boldsymbol{\rho}+\boldsymbol{X}_0$. Thus, $\mathcal{S}(\boldsymbol{X}_0)$ is a subset of $\Omega(\boldsymbol{X}_0)$. By the definition of
$\mathcal{N}_{L_p,\epsilon}( \boldsymbol{X}_0)$ and $\mathcal{B}_{L_p,\epsilon}( \boldsymbol{X}_0) \subset \mathcal{Z}_{int}$, $\mathcal{B}_{L_p,\epsilon}$ is an any-way-out set. Since $\mathcal{S}(\boldsymbol{X}_0)$ is a some-way-out set of $\boldsymbol{X}_0$, there exists $\boldsymbol{Y} \in \partial \mathcal{S}(\boldsymbol{X}_0)$ and a unit vector $\boldsymbol{v}$ defined as:
\begin{equation*}
    \boldsymbol{v} = \frac{(\boldsymbol{Y}-\boldsymbol{X}_0)}{\| \boldsymbol{Y}-\boldsymbol{X}_0\|}, \; s.t. \, \boldsymbol{Y}+ \beta \boldsymbol{v} \in \mathcal{S}(\boldsymbol{X}_0)^C, \; \forall \beta > 0.
\end{equation*}
Since $\mathcal{B}_{L_p,\epsilon}( \boldsymbol{X}_0)$ is an any-way-out set, there exists a point 
\begin{equation*}
    \boldsymbol{X} \in \partial \mathcal{B}_{L_p,\epsilon}( \boldsymbol{X}_0), \; s.t. \frac{(\boldsymbol{X}-\boldsymbol{X}_0)}{\| \boldsymbol{X}-\boldsymbol{X}_0\|}= \boldsymbol{v},
\end{equation*}
and $\boldsymbol{X}+\alpha \boldsymbol{v} \in \mathcal{B}_{L_p,\epsilon}( \boldsymbol{X}_0)^C, \forall \alpha > 0$. 
 Let $\delta=\boldsymbol{X}-\boldsymbol{X}_0$, $\rho=\boldsymbol{Y}-\boldsymbol{X}_0$ and $\boldsymbol{W}=\boldsymbol{X}_0+\boldsymbol{\delta}+\boldsymbol{\rho}$. Thus, 
 \begin{align*}
 \begin{aligned}
 \boldsymbol{W}=& \boldsymbol{X}_0+  \boldsymbol{X}- \boldsymbol{X}_0+  \boldsymbol{Y} -\boldsymbol{X}_0 
 =  \boldsymbol{X}+  \boldsymbol{Y}- \boldsymbol{X}_0 \\
 = &\boldsymbol{Y}+\| \boldsymbol{X}-\boldsymbol{X}_0\| \frac{(\boldsymbol{X}-\boldsymbol{X}_0)}{\| \boldsymbol{X}-\boldsymbol{X}_0\|} 
 = \boldsymbol{Y}+\beta \boldsymbol{v},   
 \end{aligned}
 \end{align*}
 where $\beta = \| \boldsymbol{X}-\boldsymbol{X}_0\|>0$. Thus, $\boldsymbol{W} \in \mathcal{S}(\boldsymbol{X}_0)^C$. Similarly, 
 \begin{align*}
 \begin{aligned}
 \boldsymbol{W}=& \boldsymbol{X}+  \boldsymbol{Y}- \boldsymbol{X}_0 \\
 = &  \boldsymbol{X}+\| \boldsymbol{Y}-\boldsymbol{X}_0\| \frac{(\boldsymbol{Y}-\boldsymbol{X}_0)}{\| \boldsymbol{Y}-\boldsymbol{X}_0\|} \\
 = &\boldsymbol{X}+\alpha \boldsymbol{v},
 \end{aligned}
 \end{align*}
 where $\alpha = \| \boldsymbol{Y}-\boldsymbol{X}_0\|>0$. Thus, $\boldsymbol{W} \in \mathcal{B}_{L_p,\epsilon} (\boldsymbol{X}_0)^C$. In other words, $\boldsymbol{W} \in \Omega(\boldsymbol{X}_0)$ but $\boldsymbol{W} \notin \mathcal{B}_{L_p,\epsilon} (\boldsymbol{X}_0) \cup \mathcal{S}(\boldsymbol{X}_0).$ It proves that the combined attack space $\Omega(\boldsymbol{X}_0)$ is larger than the union of the two attack spaces, $\mathcal{B}_{L_p,\epsilon} (\boldsymbol{X}_0)$ and $\mathcal{S}(\boldsymbol{X}_0)$. 
\end{proof}

According to Theorem \ref{theorem1}, there exist $\boldsymbol{W}$ such that $\boldsymbol{W} \in \Omega(\boldsymbol{X}_0)$ but $\boldsymbol{W} \notin \mathcal{B}_{L_p,\epsilon} (\boldsymbol{X}_0) \cup \mathcal{S}(\boldsymbol{X}_0)$, proving that the combined attack space $\Omega(\boldsymbol{X}_0)$ is larger than the union of the two attack spaces, $\mathcal{B}_{L_p,\epsilon} (\boldsymbol{X}_0)$ and $\mathcal{S}(\boldsymbol{X}_0)$.

\subsection{Complete Results of Comparison with Baselines}

  \begin{table}[h!]
        \centering

        \caption{The attack results of the eight baselines and AIR on \textbf{MegaGAN}. The results of ORI are from the original output without attack. The LPIPS in the ORI row is the LPIPS between the original output and its JPEG-50 image for reference. The arrow direction indicates more effective attack.
        } 

         
            \centering
            \setlength{\tabcolsep}{0.4mm}
            \begin{tabular}{lcccccc}
      \toprule
    Method  & $\mathrm{ASR_{}}$   $\uparrow$ & COS    $\downarrow$        & $\mathrm{RR_{}}$   $\uparrow$    & LPIPS $\uparrow$ & EMSE $\uparrow$ & AWSS $\downarrow$\\
    \midrule
     ORI    & 0.016  & 0.452  & 0.018     & 0.001  & 0.000  & 69.653 \\
     PG    &0.021   &   0.441 	&0.080 		&0.003 	&0.047    &   64.533    \\
    DA   & 0.375 & 0.287  & 12.523   & 0.007  & 0.058 & 36.888 \\ 
    TAA & 0.018 & 0.449  & 0.020     & 0.001  & 0.018 & 67.734  \\
    LSA   & 0.096 & 0.395  & 4.079     & 0.003  & 0.044 & 60.933\\
    SAA   & 0.105  & 0.388  & 1.506    & 0.003  & 0.044 & 58.576  \\
    CMUA & 0.157 & 0.361  & 14.957    & 0.008  & 0.056 & 53.767  \\
    TCA-GAN &  0.017 & 0.449  & 1.069    & 0.001  &  0.028  &  59.924 \\
    TAFIM &  0.013 & 0.454& 0.013&  0.000& 0.015&  59.639\\
    \textbf{AIR } & \textbf{0.796} & \textbf{0.168} & \textbf{96.294}   & \textbf{0.009} & \textbf{0.059} & \textbf{13.689}  \\ 
  
      \bottomrule

    \end{tabular}%
            
      \label{tb:complete_compare_megagan}
    \end{table}

    \begin{table}[h!]
          \centering
          \caption{The attack results of the eight baselines and AIR on \textbf{SimSwap}. The results of ORI are from the original output without attack. The LPIPS in the ORI row is the LPIPS between the original output and its JPEG-50 image for reference. The arrow direction indicates more effective attack.
          } 

           
              \centering
              \setlength{\tabcolsep}{0.4mm}
              \begin{tabular}{lcccccc}
        \toprule
      Method  & $\mathrm{ASR_{}}$   $\uparrow$ & COS    $\downarrow$        & $\mathrm{RR_{}}$   $\uparrow$    & LPIPS $\uparrow$ & EMSE $\uparrow$ & AWSS $\downarrow$\\
      \midrule

       ORI & 0.007  & 0.492 & 0.007 & 0.004 & 0.000 & 69.303  \\
        PG   & 0.016 & 0.464 & 0.042 & 0.001 & 0.019 &69.549  \\
        DA  & 0.096 & 0.400 & 0.619  & 0.005 & 0.036 & 52.764  \\
        TAA  & 0.007 & 0.489 & 0.008  & 0.001 & 0.008 & 67.957 \\
       LSA & 0.029 & 0.458 & 0.077  & 0.002 & 0.026 & 63.801  \\
        SAA  & 0.022  & 0.470 & 0.044& 0.002 & 0.026 & 65.473  \\
        CMUA  & 0.053 & 0.423 & 0.546  & 0.004 & 0.034 & 57.180 \\
        TCA-GAN &  0.010 & 0.481  & 0.013    &  0.000  &  0.010  & 68.223  \\
        TAFIM & 0.012 & 0.481  &  0.015   & 0.000  & 0.007  &  69.789  \\
      \textbf{AIR}   & \textbf{0.699}  & \textbf{0.191} & \textbf{82.384}  & \textbf{0.010} & \textbf{0.048} & \textbf{14.877} \\
     
        \bottomrule

      \end{tabular}%
              
        \label{tb:complete_compare_simswap}
      \end{table}

      \begin{table}[h!]
            \centering

            \caption{The attack results of the eight baselines and AIR on \textbf{FaceShifter}. The results of ORI are from the original output without attack. The LPIPS in the ORI row is the LPIPS between the original output and its JPEG-50 image for reference. The arrow direction indicates more effective attack.
            } 

             
                \centering
                \setlength{\tabcolsep}{0.4mm}
                \begin{tabular}{lcccccc}
          \toprule
        Method  & $\mathrm{ASR_{}}$   $\uparrow$ & COS    $\downarrow$        & $\mathrm{RR_{}}$   $\uparrow$    & LPIPS $\uparrow$ & EMSE $\uparrow$ & AWSS $\downarrow$\\
        \midrule
    
          ORI  & 0.009 & 0.458 & 0.013  & 0.003 & 0.000 & 70.552 \\
          PG  & 0.006 & 0.447 & 0.006  & 0.001 & 0.020 & 70.098  \\
          DA  & 0.157 & 0.348 & 2.759 & 0.006 & 0.042 & 49.164 \\
          TAA  & 0.009 & 0.454 & 0.012 & 0.000 & 0.007 & 69.797  \\
          LSA  & 0.067 & 0.378 & 0.488 & 0.005 & 0.040 & 55.718  \\
          SAA & 0.037 & 0.431 & 0.058 & 0.001 & 0.020 & 65.573  \\
          CMUA & 0.066 & 0.393 & 0.437 & 0.005 & 0.039 & 58.323 \\
          TCA-GAN &  0.010  & 0.458  &  0.011  &  0.000  & 0.014  & 69.951  \\
          TAFIM &   0.010 & 0.459  &  0.012   &  0.000
          &  0.005  &  67.901  \\
        \textbf{AIR} & \textbf{0.811}  & \textbf{0.159} & \textbf{103.810}  & \textbf{0.010} & \textbf{0.051} & \textbf{11.267}\\

          \bottomrule

        \end{tabular}%
                
          \label{tb:complete_compare_faceshifter}
        \end{table}

        \begin{table}[h!]
              \centering
              \caption{The attack results of the eight baselines and AIR on \textbf{DiffSwap}. The results of ORI are from the original output without attack. The LPIPS in the ORI row is the LPIPS between the original output and its JPEG-50 image for reference. The arrow direction indicates more effective attack.
              } 

               
                  \centering
                  \setlength{\tabcolsep}{0.4mm}
                  \begin{tabular}{lcccccc}
            \toprule
          Method  & $\mathrm{ASR_{}}$   $\uparrow$ & COS    $\downarrow$        & $\mathrm{RR_{}}$   $\uparrow$    & LPIPS $\uparrow$ & EMSE $\uparrow$ & AWSS $\downarrow$\\
          
          \midrule
             ORI    & 0.448   & 0.294     & 1.369         & 0.000 & 0.000  & 40.505     \\
             PG     & 0.510 &   0.281 	& 6.371 	 	& 0.011 	& 0.056  &  42.824     \\
            DA  & 0.794 & 0.197    & 39.091        & 0.012    & 0.056 &  28.013    \\ 
            TAA  & 0.545 & 0.278    & 5.989        & 0.011    & 0.056 & 43.713    \\
            LSA   & 0.671  & 0.239    & 17.690       & 0.011    & 0.056 &  36.583    \\
            SAA   & 0.706 & 0.239    & 16.649        & 0.011    & 0.056 & 37.804    \\
            CMUA   & 0.689 & 0.235    & 20.713       & 0.012    & 0.056  & 32.092    \\
            TCA-GAN & 0.564  & 0.276    &  5.968       &  0.011    & 0.056  & 45.079    \\
            TAFIM  &  0.538 &  0.281    &  5.561       &   0.011    & 0.056 & 46.445   \\
            \textbf{AIR } & \textbf{0.970    }  & \textbf{0.101} & \textbf{193.520}  & \textbf{0.013    } & \textbf{0.056    } & \textbf{7.354}    \\ 
         
            \bottomrule

          \end{tabular}%
                  
            \label{tb:complete_compare_diffswap}
          \end{table}

          \begin{table}[H]
                \centering
                \caption{The attack results of the eight baselines and AIR on \textbf{Diffface}. The results of ORI are from the original output without attack. The LPIPS in the ORI row is the LPIPS between the original output and its JPEG-50 image for reference. The arrow direction indicates more effective attack.
                } 
                 
                    \centering
                    \setlength{\tabcolsep}{0.4mm}
                    \begin{tabular}{lcccccc}
              \toprule
            Method  & $\mathrm{ASR_{}}$   $\uparrow$ & COS    $\downarrow$        & $\mathrm{RR_{}}$   $\uparrow$    & LPIPS $\uparrow$ & EMSE $\uparrow$ & AWSS $\downarrow$\\
            \midrule
                ORI & 0.009 & 0.458  & 0.013    & 0.003 & 0.000 & 90.523      \\
                PG  & 0.066 & 0.461      & 0.566           & 0.007      &     0.034    & 88.515      \\
                DA  & 0.402 & 0.313      & 22.738            & 0.011      & 0.036    & 91.319      \\
                TAA  & 0.019  & 0.545      & 0.101           & 0.008      & 0.045  & 89.543      \\
                LSA  & 0.072 & 0.375      & 2.994            & 0.013      & \textbf{0.047}   & 63.087      \\
                SAA & 0.144 & 0.395      & 2.367            & 0.012      & 0.047  & 70.777      \\
                CMUA & 0.076 & 0.442      & 0.713            & 0.008      & 0.035  & 75.064      \\
                TCA-GAN  &   0.020 & 0.539      &  0.237          &  0.008      & 0.045  & 89.059      \\
                TAFIM &  0.021  &  0.548      &  0.056           &  0.003      &  0.028      &91.370
                \\
              \textbf{AIR}  & \textbf{0.902    } & \textbf{0.116    } & \textbf{190.082    }  & \textbf{0.013 } & 0.038 & \textbf{7.080}  \\
          
              \bottomrule

            \end{tabular}%
                    
              \label{tb:complete_compare_diffface}
            \end{table}

\subsection{Complete Table of Ablation Study}

\begin{table}[H]

        \centering

        \caption{Summary of ablation study results on MegaGAN, SimSwap and FaceShifter. AIR with and without ATI are denoted as AIR and $\mathrm{AIR_{woATI}}$, respectively. AIR with surrogate models including FS networks is denoted as $\mathrm{AIR_{FS}}$. $\mathrm{AIR_{woATI}}$, $\mathrm{AIR_{FS}}$ and ATI are performed under $\epsilon=0.02$.
        AIA with $\epsilon=0.02$ and $\epsilon=0.05$ are denoted as $\mathrm{AIA_{0.02}}$ and $\mathrm{AIA_{0.05}}$, respectively.}

        \setlength{\tabcolsep}{0.4mm}
         \begin{tabular}{llcccccc}
	\toprule
  & Method & $\mathrm{ASR_{}}$   $\uparrow$ & COS    $\downarrow$        & $\mathrm{RR_{}}$   $\uparrow$             & LPIPS $\uparrow$ & EMSE $\uparrow$ & AWSS $\downarrow$\\
\midrule
\multirow{9}{*}{\rotatebox[origin=l]{90}{MegaGAN}} & ORI   & 0.016 & 0.452 & 0.018    & 0.001 & 0.000 & 69.653   \\
& $\mathrm{AIA_{0.02}}$   & 0.444 & 0.286 & 9.606   & 0.004 & 0.051 & 35.108  \\
& $\mathrm{AIA_{0.05}}$   & 0.628  & 0.231 & 30.508  & 0.006 & 0.054 & 24.204   \\
& RFA         & 0.536  & 0.273 & 62.596   & 0.009 & 0.048 & 34.736  \\
& $\mathrm{AIR_{woATI}}$  & 0.163  & 0.371 & 11.991   & 0.005 & 0.045 & 55.641   \\
& $\mathrm{AIR_{FS}}$    & 0.502  & 0.260 & 24.470  & 0.006 & 0.054 & 32.313   \\
& $\mathrm{AIR_{\beta=10}}$   & 0.665       &0.215 & 40.400   & 0.007 & 0.056 & 21.397 \\
& $\mathrm{AIR_{\beta=2}}$    & 0.765   & 0.185 & 67.825    & 0.008 & 0.058 & 15.983 \\
& \textbf{$\mathrm{AIR_{\beta=1}}$} & \textbf{0.796 }& \textbf{0.168} & \textbf{96.294}  & \textbf{0.009} & \textbf{0.059} & \textbf{13.689}  \\

\midrule
\multirow{9}{*}{\rotatebox[origin=l]{90}{SimSwap}} & ORI & 0.007& 0.492 & 0.007   & 0.004 & 0.000 & 69.303   \\
& $\mathrm{AIA_{0.02}}$   & 0.211    & 0.354 & 3.072    & 0.004 & 0.035 & 40.819   \\
& $\mathrm{AIA_{0.05}}$       & 0.307     & 0.313 & 7.090   & 0.006 & 0.039 & 32.802   \\
& RFA   & 0.523    & 0.273 & 77.814   & 0.010 & 0.039 & 31.130   \\
& $\mathrm{AIR_{woATI}}$& 0.127 & 0.388 & 1.858    & 0.005 & 0.034 & 49.577   \\
& $\mathrm{AIR_{FS}}$  & 0.335    & 0.298 & 12.961   & 0.007 & 0.042 & 30.886   \\
& $\mathrm{AIR_{\beta=10}}$   & 0.426    & 0.272 & 16.467   & 0.007& 0.042 & 29.753 \\
& $\mathrm{AIR_{\beta=2}}$    & 0.566    & 0.231 &  37.313  & 0.008& 0.045 & 23.789  \\
& \textbf{$\mathrm{AIR_{\beta=1}}$}  & \textbf{0.699} & \textbf{0.191} & \textbf{82.384}   & \textbf{0.010} & \textbf{0.048} & \textbf{14.877}  \\
\midrule  
\multirow{9}{*}{\rotatebox[origin=l]{90}{FaceShifter}} & ORI & 0.009& 0.458 & 0.013   & 0.003 & 0.000 & 70.552   \\
& $\mathrm{AIA_{0.02}}$    & 0.366       & 0.286 & 13.036  & 0.005 & 0.041 & 32.592   \\
& $\mathrm{AIA_{0.05}}$     & 0.588      & 0.225 & 31.252   & 0.007 & 0.044 & 20.427  \\
& RFA    & 0.496 & 0.269 & 69.158  & 0.009 & 0.039 & 33.648   \\
& $\mathrm{AIR_{woATI}}$ & 0.094 &   0.380 & 1.171    & 0.004 & 0.034 & 54.363   \\
& $\mathrm{AIR_{FS}}$  & 0.497   & 0.248 & 22.783  & 0.007 & 0.045 & 25.374   \\
& $\mathrm{AIR_{\beta=10}}$  & 0.647   &  0.212& 50.160  & 0.008& 0.047 & 19.448  \\
& $\mathrm{AIR_{\beta=2}}$  & 0.749   & 0.180 & 75.574   & 0.009& 0.050 & 15.202 \\
& \textbf{$\mathrm{AIR_{\beta=1}}$} & \textbf{0.811} & \textbf{0.159} & \textbf{103.810}  & \textbf{0.010} & \textbf{0.051} & \textbf{11.267} \\

\bottomrule
\end{tabular}%
        
        \label{tb:complete_ablation_GAN}
\end{table}

\begin{table}[H]

  \centering
  \caption{Summary of ablation study results on DiffSwap and Diffface. AIR with and without ATI are denoted as AIR and $\mathrm{AIR_{woATI}}$, respectively. AIR with surrogate models including FS networks is denoted as $\mathrm{AIR_{FS}}$. $\mathrm{AIR_{woATI}}$, $\mathrm{AIR_{FS}}$ and ATI are performed under $\epsilon=0.02$.
  AIA with $\epsilon=0.02$ and $\epsilon=0.05$ are denoted as $\mathrm{AIA_{0.02}}$ and $\mathrm{AIA_{0.05}}$, respectively.}

  \setlength{\tabcolsep}{0.4mm}
   \begin{tabular}{llcccccc}
\toprule
& Method & $\mathrm{ASR_{}}$   $\uparrow$ & COS    $\downarrow$        & $\mathrm{RR_{}}$   $\uparrow$             & LPIPS $\uparrow$ & EMSE $\uparrow$ & AWSS $\downarrow$\\
\midrule

\multirow{9}{*}{\rotatebox[origin=l]{90}{DiffSwap}} & ORI & 0.448& 0.294 & 1.369   & 0.000 & 0.000 & 40.505  \\
& $\mathrm{AIA_{0.02}}$   &	0.887 & 0.163 &	77.322  &	0.012 &	0.056 & 16.498\\
& $\mathrm{AIA_{0.05}}$   &0.954  & 0.129 &	124.860		&0.012	&0.056 & 9.502\\
& RFA   &	0.726 & 0.209&	47.432 &	0.012	& \textbf{0.057} & 27.135\\
& $\mathrm{AIR_{woATI}}$  &	0.681 &0.240 &	16.779&	0.011&	0.056&	31.418\\
& $\mathrm{AIR_{FS}}$ &	0.827 & 0.189&	48.861 &	0.012 &0.056	&24.060\\
& $\mathrm{AIR_{\beta=10}}$ &	0.949 & 0.124&	134.919&	0.012&	0.056&	12.806\\
& $\mathrm{AIR_{\beta=2}}$ &	0.952 & 0.127&	132.913&	0.012&	0.056&	11.201\\
& \textbf{$\mathrm{AIR_{\beta=1}}$} & \textbf{0.970} & \textbf{0.101} & \textbf{193.520}  & \textbf{0.013} & 0.056 & \textbf{7.354} \\
\midrule
\multirow{9}{*}{\rotatebox[origin=l]{90}{Diffface}} & ORI & 0.009& 0.458 & 0.013   & 0.003 & 0.000 & 90.523  \\
& $\mathrm{AIA_{0.02}}$   &	0.462 &0.276&	25.135&	0.010&	0.038&	37.950\\
& $\mathrm{AIA_{0.05}}$    &	0.309 & 0.208&	49.519&	0.011&	0.039	&23.368\\
& RFA   &	0.508 & 0.310&	76.806&	0.013&	0.047&	42.049\\
& $\mathrm{AIR_{woATI}}$ & 0.196 &0.442&	3.578	&	0.010&	0.046&	78.661\\
& $\mathrm{AIR_{FS}}$ &	0.621 &  0.217&	60.012&	\textbf{0.014}&	\textbf{0.047}&	19.873\\
& $\mathrm{AIR_{\beta=10}}$ &	0.746 & 0.189&	75.189&	0.014&	0.047&	18.753\\
& $\mathrm{AIR_{\beta=2}}$ &	0.735 & 0.188&	76.765&	0.014&	0.047&	17.574\\
& \textbf{$\mathrm{AIR_{\beta=1}}$} & \textbf{0.902} & \textbf{0.116} & \textbf{190.082}  & 0.013 & 0.038 & \textbf{7.080} \\

\bottomrule
\end{tabular}%
  
  \label{tb:complete_ablation_diffusion}
\end{table}

\subsection{Results of different surrogate face recognition models.} In Table \ref{tab:fr} in the appendix, we compare 4 different kinds of face recognition models as surrogate models of AIR against MegaGAN, including SphereFace \footnote{https://github.com/wy1iu/sphereface.git}, CosFace \footnote{https://github.com/MuggleWang/CosFace-pytorch.git}, ElasticFace \footnote{https://github.com/fdbtrs/ElasticFace.git}, and ArcFace. ArcFace delivers the best attack performance among all face recognition models, while AIR with the other models still outperforms the baselines. 

\begin{table}[t]
  \footnotesize
  \caption{Comparison of different face recognition models.}
    \centering

  \begin{tabular}{lccccc}
    \toprule
  Model   & $\mathrm{ASR_{}}$   $\uparrow$  & COS    $\downarrow$        & $\mathrm{RR_{}}$   $\uparrow$            & LPIPS $\uparrow$ & EMSE $\uparrow$ \\
  \midrule
  Sphereface   & 0.618   & 0.230  & 32.167    & 0.007 & 0.055   \\
  CosFace & 0.646 & 0.224 & 35.150  & 0.007 & 0.056  \\
  
  
  ElasticFace   & 0.638   & 0.223  & 36.105     & 0.007  & 0.055  \\
  
  
  ArcFace   & \textbf{0.796 }   & \textbf{0.168} & \textbf{96.294}    & \textbf{0.009 } & \textbf{0.059 } \\
  
  \bottomrule
  \end{tabular}
  
    \label{tab:fr}
  \vspace{-0.5cm}
  \end{table}

\subsection{Detail of Surrogate Models of Baselines for Transfer Attack}
Most of the baselines, including DA, TAA, LSA, SAA, CMUA, and TAFIM, rely on surrogate FS models to generate AEs against target FS models. However, these baselines cannot effectively use diffusion-based models as surrogates for two main reasons: 1) they were designed specifically for attacking GAN-based FS models, not diffusion-based ones, and therefore do not account for the unique procedures needed to generate AEs with diffusion-based models, 2) applying these baselines directly to diffusion-based models is challenging due to the need for backpropagation of gradients from the final loss to the initial input across the entire computation graph. Diffusion-based FS models generate final outputs through multi-step inference (often exceeding 50 steps), leading to large computation graphs. Backpropagation through these large graphs incurs significant memory usage, high computational costs, and issues with gradient vanishing or explosion. Therefore, when generating AEs for a target GAN-based model, we use two other GAN-based models as surrogates for the baselines, excluding diffusion-based models from the surrogates. Similarly, for a target diffusion model, we also employ only GAN-based models as surrogates. Note that the target model is not used as a surrogate to maintain the focus on transferability in the attack.

\subsection{Comparison of AIR with Transfer Attacks}

Except for the aforementioned baselines, we also conduct comparisons between AIR and attacks adapted from TI and TAIG \cite{huang2022transferable}, which are state-of-the-art transferable attacks against image classification. To explore the influence of surrogate models simultaneously, we apply TI and TAIG to DA, utilizing either FS or face recognition models as surrogate models. Since MegaGAN, FaceShifter, and SimSwap are relatively more robust than other FS models against transferable attack according to the previous results, we take the three FS models as example and compare AIR with TAIG and TI on the three FS models. When employing face recognition models as surrogate models, we use the same eight open-source face recognition models and the identical identity loss function as described in Eq. \ref{eqn:AIA_objective_function}, mirroring the approach adopted by AIR. We set $\epsilon$ to 0.02 for all attacks (setting 1). For FS models used as surrogate models, we apply TI and TAIG on DA, incorporating the mean square error between the original output and the attack output as the objective function to enhance transferability further. Two FS models serve as surrogate models, while the remaining face swapping model is the target model.  Given that DA with TAIG and DA with TI underperform when $\epsilon=0.02$, we increase $\epsilon$ to 0.05 (setting 2). We employ the same 1000 source images to generate adversarial images. The experimental results are presented in Table \ref{tb:transfer}. Due to budget constraints, we exclude the AWSS metric from this evaluation.

 Based on the findings presented in Table \ref{tb:transfer}, it is evident that attacking the identity extraction process yields greater transferability compared to targeting the swapped output of surrogate FS models in compromising identity information. It highlights that focusing on attacking the identity extraction process is considered more suitable for our objectives, given that our attack's primary goal is to hinder FS models from generating facial images resembling source faces. When comparing the performance of AIR, TAIG, and TI under the same $\epsilon$ value and using the same set of surrogate models and objective functions, AIR consistently outperforms TAIG and TI. When compared to DA with TAIG and TI under $\epsilon=0.05$, AIR outperforms them in nearly all metrics except for the EMSE and LPIPS against MegaGAN. However, the adversarial examples generated by DA with TI and TAIG under $\epsilon=0.05$ are with conspicuous noise patterns as shown in Fig. \ref{Fig:TI_TAIG}. These results demonstrate the high efficacy of AIR in terms of transferability and image quality.

\begin{table}[h!]
    \centering

    \caption{The attack results of the TAIG, TI, and AIR on MegaGAN, SimSwap, and FaceShifter. The methods with the subscript of "FR" are the results of corresponding methods under setting 1. The methods with the subscript of "FS" are the results of corresponding methods under setting 2.}

      \setlength{\tabcolsep}{0.4mm}
		\begin{tabular}{llcccccc}
			\toprule
      & Method  & $\epsilon$  & $\mathrm{ASR_{}}$   $\uparrow$  & $\mathrm{COS_{}}$    $\downarrow$        & $\mathrm{RR_{}}$   $\uparrow$            & LPIPS $\uparrow$ & EMSE $\uparrow$ \\
			\midrule
\multirow{6}{*}{\rotatebox[origin=l]{90}{MegaGAN}} & Original & 0 & 0.016 &  0.452 & 0.018  & 0.001 & 0.000 \\
&    $\mathrm{TAIG_{FS}}$     & 0.05  & 0.510   &   0.255 & 22.723 & \textbf{0.010} & \textbf{0.060} \\
& 	  	$\mathrm{TI_{FS}}$   &  0.05  & 0.447    & 0.266 & 20.013  & 0.009 & 0.060 \\
&          $\mathrm{TAIG_{FR}}$      & 0.02 & 0.468 & 0.274 & 23.475 & 0.004 & 0.051 \\
&        	 	$\mathrm{TI_{FR}}$  &  0.02  & 0.431 &  0.280 & 24.016  & 0.005 & 0.050\\	
&         AIR   &{0.02}  &\textbf{0.796} &\textbf{0.168}	&\textbf{96.294}	  &0.009 &{0.059} \\
			\midrule
\multirow{6}{*}{\rotatebox[origin=l]{90}{SimSwap}} & Original & 0        & 0.007      & 0.492        & 0.007           & 0.004 & 0.000    \\
		 & $\mathrm{TAIG_{FS}}$     & 0.05  & 0.232 & 0.328  & 6.816    & 0.009  & 0.045  \\
     &      $\mathrm{TI_{FS}}$       &0.05 & 0.123     & 0.369      & 1.841   & 0.007 & 0.041 \\
     &  $\mathrm{TAIG_{FR}}$      & 0.02  &	0.599 &0.222 &46.985  &	0.008 &	0.046 \\
     &           $\mathrm{TI_{FR}}$      &0.02 &0.432    & 0.269 	&30.390 		&0.007 	&0.042 \\
     &  AIR     & {0.02}&\textbf{0.699} & \textbf{0.191}&	\textbf{82.384}	 &\textbf{0.010}	&\textbf{0.048} \\ 
			\midrule
  \multirow{6}{*}{\rotatebox[origin=l]{90}{FaceShifter}} & Original & 0 &0.009 &0.458 &0.013  & 0.003 & 0.000                \\
	& 	 $\mathrm{TAIG_{FS}}$     & 0.05  & 0.201  & 0.319  & 3.639   & 0.008  & 0.047 \\
  &         $\mathrm{TI_{FS}}$   &0.05  & 0.261  & 0.318      & 5.122   & 0.008 & 0.047 \\		
  &  $\mathrm{TAIG_{FR}}$      & 0.02  &0.463 &0.269 	&25.655 	 	&0.007 	&0.041\\
  &             $\mathrm{TI_{FR}}$     &0.02  &0.275  &0.335 	&8.136	 	&0.005	&0.035  \\
          
	& 		 AIR    &{0.02} &\textbf{0.811}&\textbf{0.159} &	\textbf{103.810}	 &\textbf{0.010}&\textbf{0.051} \\
			\bottomrule
		\end{tabular}
     
	\label{tb:transfer}

\end{table}

\begin{figure}
\centering
\includegraphics[width=0.15\textwidth]{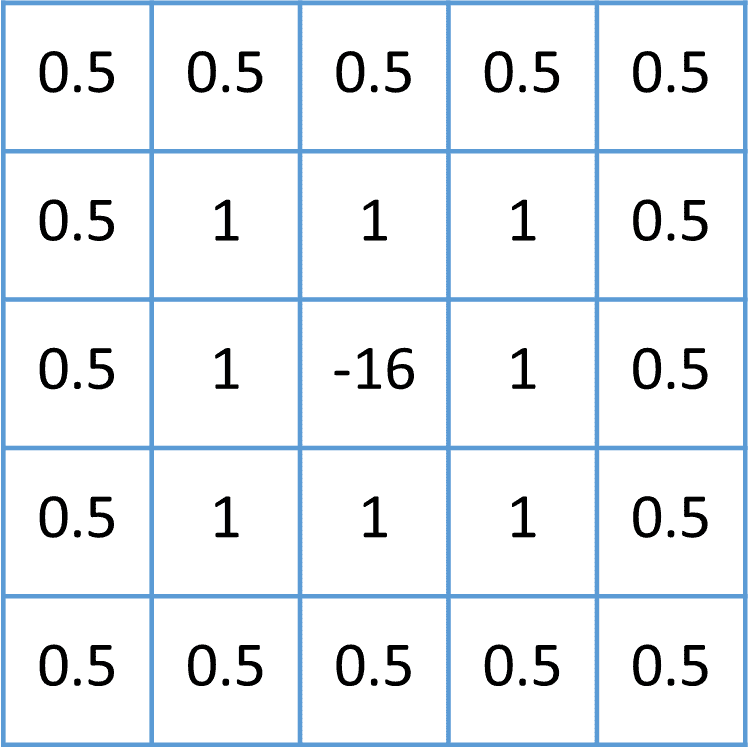}
\caption{The filter.}
	\label{fig:filter}
\end{figure}



 \begin{figure} 
	\centering 
	\includegraphics[width=0.8\columnwidth]{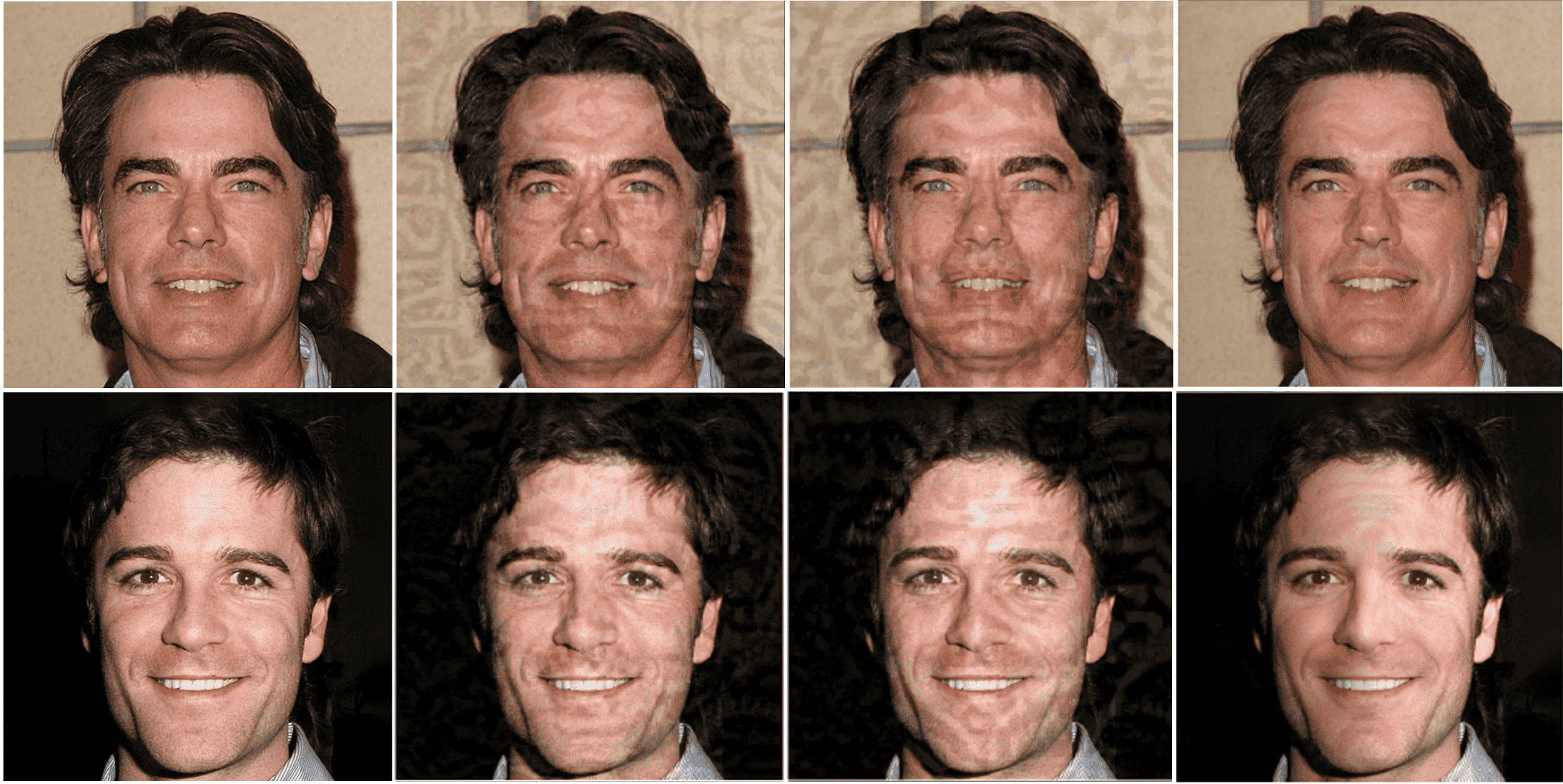} 
	\caption{Adversarial examples generated by TI, TAIG, and AIR. The first column is the original images. The images from the second column to the fourth column are images from TI, TAIG, and AIR respectively.} 
	\label{Fig:TI_TAIG} 
\end{figure}


\subsection{User Study} To further validate the acceptability of the difference in illumination between the adversarial example and the original image, we conducted a user study with approval from the Institutional Review Board (IRB) at Nanyang Technological University, Singapore (IRB-2021-1055). The objective of the study is to assess whether the adversarial perturbations generated by the best three methods are conspicuous and affect image qualities. The user study is conducted in a single room with a 27 inches iMAC monitor to present all the visual stimuli and instructions. The experiment script is coded by MATLAB R2021b with Psychophysics Toolbox Version 3. After the script is ready, we post the recruitment information on Telegram and successfully recruit 72 participants between the age of 20 to 50, including 32 males and 40 females. The procedure of the user study is as follows: 1) in the preparation stage, each participant needs to fill in a consent form and demographic information; 2) in the introduction stage, each participant is required to read the introduction and then a researcher will further elaborate the details of the evaluation to them; 3) in the practice stage, the participants need to evaluate the same four images, composed of two high-quality original images and two attack images with obvious noise selected by us. This step aims to confirm that the participant understands their task in this user study well; 4) in the formal evaluation stage,  each image will appear for 2 seconds, and then the participants have 10 seconds to give a score from 1 to 5 to indicate how they feel about the artifacts on the image.

To eliminate the impact of variable image qualities, we follow the Latin square design and randomly selected 100 images from the original source images and their corresponding adversarial examples generated by DA, CMUA, and AIR as our test dataset. Each participant assesses 100 images containing 25 original and 25 adversarial images from each of the three attacks. Every fourth participant evaluate all 400 images.  The images are rated on a scale of 1-5, where lower scores indicate less perceptible perturbations with less impact on image quality. A score of 4 or 5 indicates conspicuous perturbations that significantly affected image quality. The meanings of each score are provided as follows: 1 means “Certain that there is no artifact”; 2 means “There are slight artifacts on few regions”; 3 means “There are artifacts on the face but do not influence the quality of images”; 4 means “There are strong artifacts on the face region and influence the quality of images”; 5 means “There are strong artifacts on the entire region and influence the quality of images seriously. We collate and compare the ratings of 72 participants, and the results showed that 61.1\% and 74.1\% of the ratings for CMUA and DA are 4 or 5, respectively. However, for AIR, only 34.5\% of the ratings are 4 or 5, which was significantly lower than the other two methods. Notably, even original images have 7.4\% of ratings of 4 and 5. These findings suggest that AIR generates less noticeable adversarial perturbations than DA and CMUA, with minimal impact on image quality.



\begin{figure}[h!]
	\centering 
	\includegraphics[width=0.8\columnwidth]{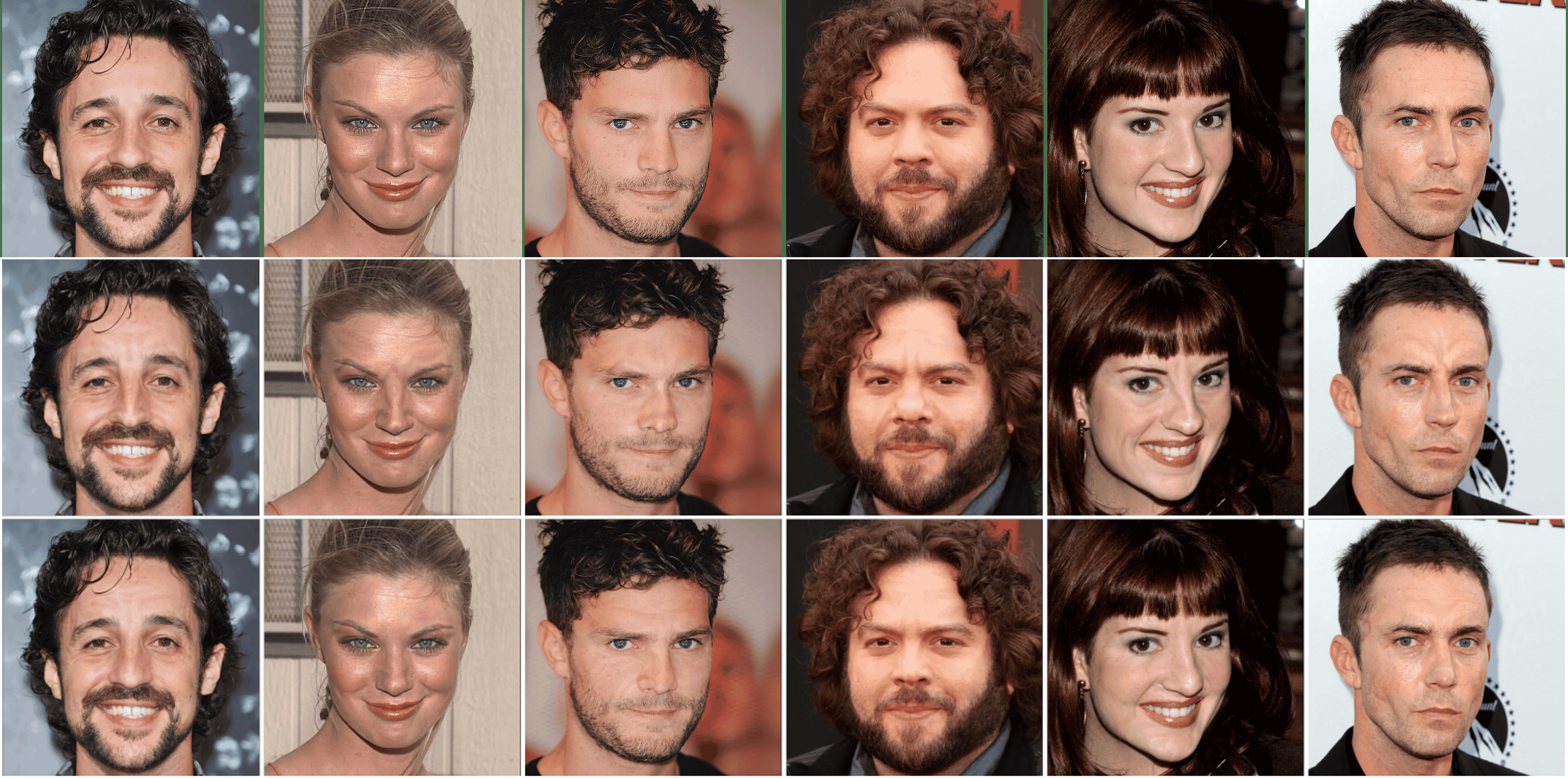} 
	\caption{AIA with ATI under $\epsilon=0.05$ and $\epsilon=0.02$. The first column is the original images. The second and third columns are the adversarial examples generated with $\epsilon=0.02$ and $\epsilon=0.05$, respectively.} 
	\label{fig:AIA_0.02_0.05_0} 
\end{figure}

\begin{figure}[h!] 
	\centering 
	\includegraphics[width=\columnwidth]{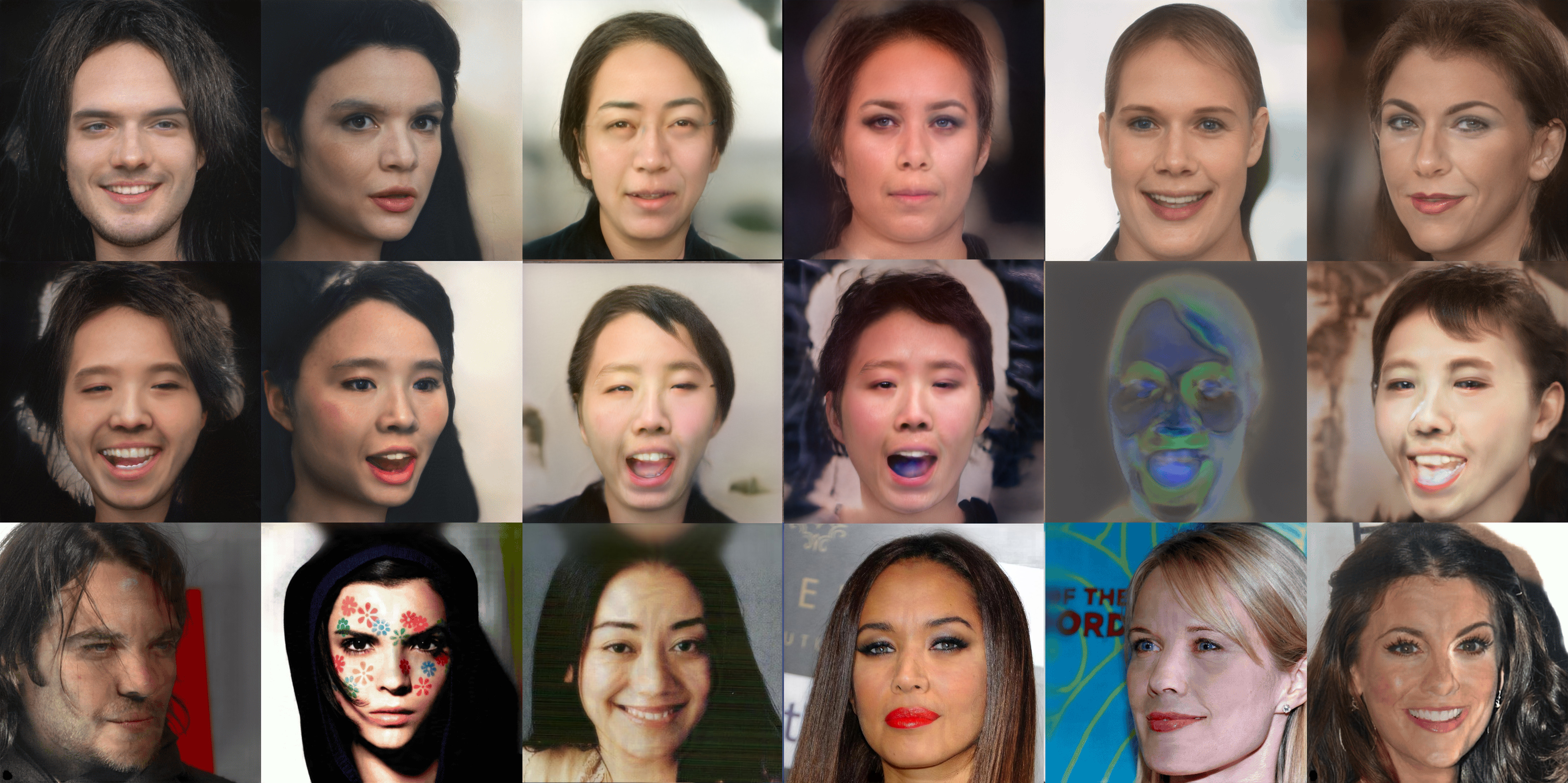}
	\caption{Examples of AIR against MegaGAN. The first row is the original output. The second row is the attacked output and the third row is the attack input of AIR.} 
	\label{Fig.result_mega} 
\end{figure}

\begin{figure}[h!] 
	\centering 
	\includegraphics[width=\columnwidth]{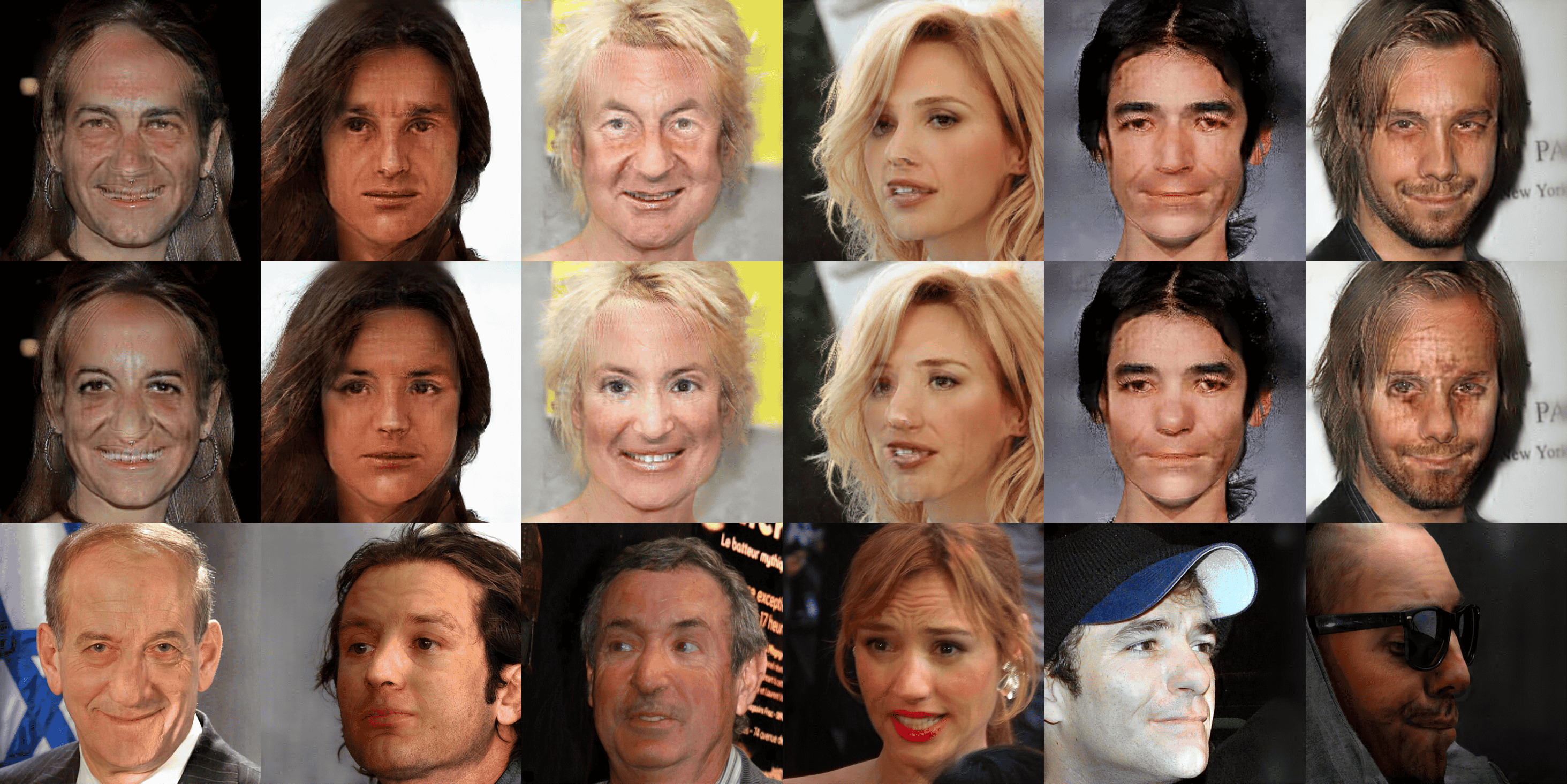}
	\caption{Examples of AIR against FaceShifter. The first row is the original output. The second row is the attacked output and the third row is the attack input of AIR.} 
	\label{Fig.result_mega} 
\end{figure}

\begin{figure}[h!] 
	\centering 
  \includegraphics[width=\columnwidth]{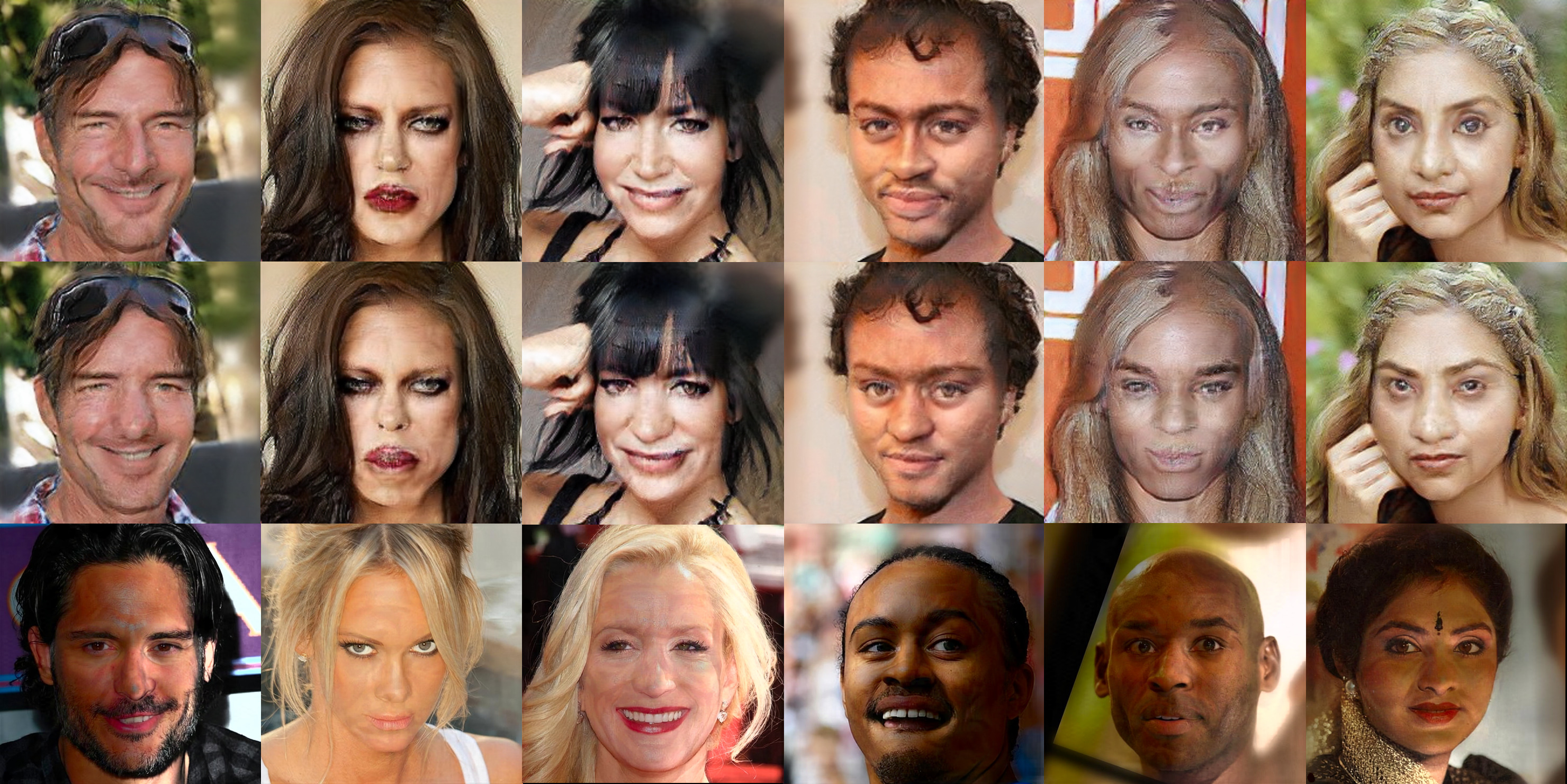}
	\caption{Examples of AIR against SimSwap. The first row is the original output. The second row is the attacked output and the third row is the attack input of AIR.} 
	\label{Fig.result_mega} 
\end{figure}

\begin{table}[H]
  \footnotesize
    \centering
    \caption{Test AIR across different datasets on MegaGAN.}
    
  \setlength{\tabcolsep}{0.4mm}
  \begin{tabular}{lccccc}
    \toprule
  Datasets & $\mathrm{ASR_{}}$   $\uparrow $ & COS    $\downarrow$        & $\mathrm{RR_{}}$   $\uparrow$            & LPIPS $\uparrow$ & EMSE $\uparrow$ \\
  \midrule
  CelebA-HQ    & 0.796   & 0.168 &  96.294    & 0.009  & 0.059   \\
  VGGFace2-HQ & 0.809 &  0.157  & 95.451     &  0.010 & 0.058 \\
  FFHQ &  0.810  & 0.166 & 88.247  & 0.010 &  0.060   \\
  
  \bottomrule
  \end{tabular}
  
    \label{tab:dataset}
  \end{table}

\subsection{Test with Various Test Datasets} 
To further verify the transferability of AIR, we evaluate AIR on three different test datasets: CelebA-HQ, VGGFace2-HQ, and FFHQ. As shown in Table \ref{tab:dataset}, AIR consistently demonstrates excellent performance across all these datasets.

\subsection{Apply AIR against Other Facial Manipulation}
To evaluate AIR's defense against other facial manipulations, we tested it on the PG facial attribute editing model \cite{huang2021initiative}\footnote{The facial attribute editing model is based on StarGAN}. 
We utilized AEs generated by AIR as input to the facial editing model. 
The facial attribute editing model requires a target attribute rather than a target identity. Thus, we randomly selected a target attribute for each input image to transfer its attribute.
The results presented in Table \ref{tb:facial_manipulation} demonstrate that AIR significantly outperforms PG in defending against facial manipulation. Despite not being specifically designed for defense against facial manipulation, AIR effectively safeguards the identities of input images by misleading the facial editor into reconstructing the faces with incorrect identities. This performance improvement is notable compared to PG, highlighting AIR's ability to better protect the identities of input images in the context of facial manipulation.

\begin{table}[t]
   
   \centering

   \caption{The attack results of AIR and PG on the target facial manipulation model. $\epsilon$ is set to 0.02 for both PG and AIR.}

   \setlength{\tabcolsep}{0.4mm}
  \begin{tabular}{lccccc}
    \toprule
    Method   & $\mathrm{ASR_{}}$   $\uparrow$ & COS    $\downarrow$        & $\mathrm{RR_{}}$   $\uparrow$            & LPIPS $\uparrow$ & EMSE $\uparrow$ \\
  \midrule                              
  Original    & 0.000    & 0.880 & 0.000  & 0.001 & 0.036  \\
  $\mathrm{PG}$        & 0.000   & 0.836 & 0.000    & 0.012 & \textbf{0.066}   \\
  \textbf{$\mathrm{AIR}$} & \textbf{0.488} & \textbf{0.262} & \textbf{31.919}   & \textbf{0.019} & 0.052 \\
  \bottomrule 
  \end{tabular}
  
  \label{tb:facial_manipulation}

  \end{table}

\subsection{Transferability in Face Recognition (FR) Model}
Moreover, we also test whether AIR with ArcFace can induce large embedding shifts in AEs, by comparing the identity vectors of original images with their corresponding AEs across other three FR models. The result shows that AIR with ArcFacce reduces cosine similarity by an average of 0.980 on CosFace, 0.979 on SphereFace, and 0.988 on ElasticFace, suggesting high transferability of AIR across different FR models and effectiveness in distorting identical information. This also justifies AIR's capability in attacking the identity extraction modules of FS models.

\subsection{Comparison with Baselines under Basic Image Operation}
The generated adversarial images are first processed by one of the aforementioned operations and then fed into the three FW models. {JPEG compression} compresses the images to the quality level of 50\%. {Resizing} rescales the adversarial examples randomly with a scale factor ranging from $0.8$ to $1.2$ and {rotation} rotates the adversarial examples clockwise with a random angle from $-15^{\circ}$ to $15^{\circ}$. Since MegaGAN, FaceShifter, and SimSwap are relatively more robust than other FS models against AIR according to the previous results, we take the three FS models as example and compare AIR with baselines under basic image operations to verify the robustness of AIR on the three FS models.  The attack results of the baseline methods and AIR under the above-mentioned pre-processing operations are given in Table \ref{tb:preprocess_compress}, \ref{tb:preprocess_resize}, and \ref{tb:preprocess_rotate}. Compared with the results without the pre-processing operations given in Table \ref{tb:complete_compare_megagan}, \ref{tb:complete_compare_simswap}, and \ref{tb:complete_compare_faceshifter}, we note that the pre-processing operations have a very minor impact on all these methods with ATI. With these pre-processing operations, AIR still performs the best among all of them on all the metrics on the three models.

\begin{table*}[ ]
	
  \centering
  \caption{The attack results of the eight baselines and AIR on MegaGAN, SimSwap and FaceShifter. The results of ORI are from the original output without attack. The LPIPS in the ORI row is the LPIPS between the original output and its JPEG-50 image, which are used as references. The results of baseline methods are evaluated by applying ATI on the corresponding methods. The $\uparrow$ indicates that a higher value stands for a stronger attack. The $\downarrow$ indicates that a lower value stands for a stronger attack.} 

\begin{tabular}{clc|ccccc }
\toprule
                             &         &               & \multicolumn{5}{c}{  JEPG Compression}    \\
Model                        & Method  & $\epsilon$  & ASR  $\uparrow$ & COS    $\downarrow$        & RR  $\uparrow$             & LPIPS $\uparrow$ & EMSE $\uparrow$    \\
\midrule
\multirow{10}{*}{\begin{tabular}[c]{@{}c@{}}Mega-\\ GAN\end{tabular}}    & ORI   &0     &0.017 & 0.451 & 0.020	 & 0.001 & 0.000  \\
  & PG      &   0.05         &  0.046  & 0.423    &  0.088     &  0.002    &  0.042       \\
&DA  &0.05    &0.343 & 0.296	&7.856		&0.007	&0.058  \\ 
&TAA  &0.05    &	0.042 & 0.439&	0.814&0.001	&0.019   \\
&LSA  &0.05    &	0.097 & 0.395&	1.640  &0.003	&0.044   \\
&SAA  &0.05    &0.097 & 0.388 &0.502   &0.003	&0.044  \\
&CMUA  &0.05   &0.155 &0.363&	15.688		&0.008	&0.055 \\

& TCA-GAN & 0.05   & 0.015 & 0.449 &	0.159	&	0.001	& 0.030 \\
& TAFIM & 0.05   &	0.012  & 0.462 &	0.016 &	0.001 &	0.019 \\

&AIR  &0.02    &\textbf{0.796}  & \textbf{0.168}&	\textbf{96.294}	 &\textbf{0.009}	&\textbf{0.059}  \\ 
\midrule\multirow{10}{*}{\begin{tabular}[c]{@{}c@{}}Sim-\\ Swap\end{tabular}}      
 & ORI & 0   & 0.011  & 0.492 & 0.014 & 0.003 & 0.000  \\
 & PG & 0.05    &0.021 & 0.453 & 0.032   & 0.002 & 0.024     \\
 & DA & 0.05     & 0.086  & 0.405 & 0.712 & 0.005 & 0.036  \\
 & TAA & 0.05     & 0.013 & 0.482 & 0.035 & 0.001 & 0.008 \\
 & LSA & 0.05   & 0.031 & 0.457 & 0.084 & 0.002 & 0.027  \\
 & SAA & 0.05   & 0.029 & 0.469 & 0.050 & 0.001 & 0.021  \\
 & CMUA & 0.05   & 0.056 & 0.424 & 0.310 & 0.004 & 0.033   \\

 & TCA-GAN & 0.05 &	0.007 & 0.482 &	0.010  &	0.000 &	0.010 \\
 & TAFIM & 0.05  &	0.010 & 0.480 &	0.011  &	0.000	& 0.010 \\

&\textbf{AIR}  &0.02     &\textbf{0.699}  &\textbf{0.191}   &\textbf{82.384}    &\textbf{0.010}	&\textbf{0.048}  \\
\midrule \multirow{10}{*}{\begin{tabular}[c]{@{}c@{}}Face-\\ Shifter\end{tabular}}  
 & ORI & 0    & 0.009 & 0.459 & 0.011 & 0.004 & 0.000   \\
 & PG & 0.05  & 0.027 & 0.419  & 0.067 & 0.002 & 0.029  \\
 & DA & 0.05    & 0.136 & 0.358 & 1.144 & 0.006 & 0.042   \\
 & TAA & 0.05    & 0.007 & 0.455 & 0.008 & 0.000 & 0.007   \\
 & LSA & 0.05   & 0.065 & 0.380 & 0.479 & 0.005 & 0.039  \\
 & SAA & 0.05    & 0.037 & 0.431 & 0.063 & 0.002 & 0.026  \\
 & CMUA & 0.05    & 0.080 & 0.389 & 0.651 & 0.005 & 0.040 \\

 & TCA-GAN & 0.05  &	0.009 & 0.459&	0.009 &	0.001 &	0.015 \\
 & TAFIM & 0.05  &	0.010 & 0.460 &	0.015  &	0.000 &	0.014 \\

&\textbf{AIR}  &0.02     &	\textbf{0.811}  & \textbf{0.159}&	\textbf{103.810}&\textbf{0.010}	&\textbf{0.051}  \\

                             \bottomrule
\end{tabular}

	\label{tb:preprocess_compress}

\end{table*}

\begin{table*}[ ]
	
  \centering
  \caption{The attack results of the eight baselines and AIR on MegaGAN, SimSwap and FaceShifter. The results of ORI are from the original output without attack.  The LPIPS in the ORI row is the LPIPS between the original output and its JPEG-50 image, which are used as references. The results of baseline methods are evaluated by applying ATI on the corresponding methods. The $\uparrow$ indicates that a higher value stands for a stronger attack. The $\downarrow$ indicates that a lower value stands for a stronger attack.} 

\begin{tabular}{clc|ccccc }
\toprule

                            &         &             & \multicolumn{5}{c}{Resizing}       \\
Model                        & Method  & $\epsilon$  & ASR  $\uparrow$ & COS    $\downarrow$        & RR  $\uparrow$            & LPIPS $\uparrow$ & EMSE $\uparrow$     \\
\midrule\multirow{8}{*}{\begin{tabular}[c]{@{}c@{}}Mega-\\ GAN\end{tabular}}
 & ORI & 0 & 0.018 & 0.453 & 0.021  & 0.001 & 0.000     \\
 & PG & 0.05 & 0.027 & 0.447 & 1.111 & 0.001 & 0.031      \\
 & DA & 0.05 & 0.357 & 0.293 & 11.944 & 0.007 & 0.057    \\
 & TAA & 0.05 & 0.034 & 0.441 & 0.472 & 0.001 & 0.008   \\
 & LSA & 0.05 & 0.092 & 0.398 & 2.891 & 0.003 & 0.044    \\
 & SAA & 0.05 & 0.106 & 0.389 & 0.889 & 0.003 & 0.044      \\
 & CMUA & 0.05 & 0.140 & 0.368 & 10.195 & 0.008 & 0.055    \\

 & TCA-GAN & 0.05 & 0.014 & 0.451 &	1.075		& 0.001 &	0.027   \\

 & TAFIM & 0.05 &	0.015 & 0.464 &	0.025&	0.000	&0.007  \\

 & ART & 0.02 & \textbf{0.807}& \textbf{0.165} & \textbf{100.103}  & \textbf{0.010} & \textbf{0.059}      \\
\midrule\multirow{8}{*}{\begin{tabular}[c]{@{}c@{}}Sim-\\ Swap\end{tabular}}     
 & ORI & 0 & 0.005 & 0.493 & 0.006 & 0.004 & 0.000     \\
 & PG & 0.05 & 0.016 & 0.463 & 0.026 & 0.001 & 0.020    \\
 & DA & 0.05 & 0.086 & 0.405 & 0.387 & 0.005 & 0.036    \\
 & TAA & 0.05 & 0.013 & 0.483 & 0.038 & 0.001 & 0.008  \\
 & LSA & 0.05 & 0.027 & 0.458 & 0.069 & 0.002 & 0.026   \\
 & SAA & 0.05 & 0.022 & 0.470 & 0.049 & 0.001 & 0.020    \\
 & CMUA & 0.05 & 0.052 & 0.429 & 0.147 & 0.004 & 0.033    \\

 & TCA-GAN & 0.05 &0.008& 0.482& 0.011	&	0.000&0.009  \\
 & TAFIM & 0.05 &0.011& 0.482	&0.013	&	0.000	&0.010 \\

 & AIR & 0.02 & \textbf{0.701} & \textbf{0.192} & \textbf{79.542}  & \textbf{0.010} & \textbf{0.048}   \\
\midrule\multirow{8}{*}{\begin{tabular}[c]{@{}c@{}}Face-\\ Shifter\end{tabular}}  
 & ORI & 0 & 0.009 & 0.459 & 0.012 & 0.003 & 0.000     \\
 & PG & 0.05 & 0.022 & 0.430 & 0.033 & 0.001 & 0.023       \\
 & DA & 0.05 & 0.136 & 0.358 & 1.112 & 0.006 & 0.042   \\
 & TAA & 0.05 & 0.011 & 0.455 & 0.014 & 0.001 & 0.008      \\
 & LSA & 0.05 & 0.069 & 0.378 & 0.492 & 0.005 & 0.040     \\
 & SAA & 0.05 & 0.035 & 0.431 & 0.056 & 0.002 & 0.026    \\
 & CMUA & 0.05 & 0.074 & 0.391 & 0.475 & 0.005 & 0.039   \\

 & TCA-GAN & 0.05 &	0.010 &0.458 &	0.011  &	0.000 &	0.013 \\

 & TAFIM & 0.05 &0.011& 0.482	&0.013	&	0.000	&0.010  \\

 & AIR & 0.02 & \textbf{0.812} & \textbf{0.156} & \textbf{106.453} & \textbf{0.010} & \textbf{0.051}    \\
                             \bottomrule
\end{tabular}

	\label{tb:preprocess_resize}

\end{table*}

\begin{table*}[ ]
	
  \centering
  \caption{The attack results of the eight baselines and AIR on MegaGAN, SimSwap and FaceShifter. The results of ORI are from the original output without attack. The LPIPS in the ORI row is the LPIPS between the original output and its JPEG-50 image, which are used as references. The results of baseline methods are evaluated by applying ATI on the corresponding methods. The $\uparrow$ indicates that a higher value stands for a stronger attack. The $\downarrow$ indicates that a lower value stands for a stronger attack.} 

\begin{tabular}{clc|ccccc}
\toprule
                            &         &              & \multicolumn{5}{c}{Rotationg}    \\
Model                        & Method  & $\epsilon$  & ASR  $\uparrow$ & COS    $\downarrow$        & RR  $\uparrow$            & LPIPS $\uparrow$ & EMSE $\uparrow$    \\
\midrule\multirow{8}{*}{\begin{tabular}[c]{@{}c@{}}Mega-\\ GAN\end{tabular}}
 & ORI & 0    & 0.043 & 0.424 & 0.072 & 0.001 & 0.000   \\
 & PG & 0.05   & 0.085 & 0.413 & 0.118 & 0.001 & 0.033      \\
 & DA & 0.05     & 0.454 & 0.271 & 16.426 & 0.008 & 0.057 \\
 & TAA & 0.05     & 0.077 & 0.412 & 2.048 & 0.001 & 0.018   \\
 & LSA & 0.05     & 0.112 & 0.374 & 2.947 & 0.003 & 0.043  \\
 & SAA & 0.05   & 0.169 & 0.363 & 0.857 & 0.003 & 0.043   \\
 & CMUA & 0.05    & 0.234 & 0.343 & 17.825 & 0.008 & 0.054   \\

 & TCA-GAN & 0.05   &0.017 &0.449&1.069		&0.001 &	0.028 \\

 & TAFIM & 0.05  &	0.044 & 0.423&	0.062&	0.000&	0.021\\

 & ART & 0.02  & \textbf{0.810} & \textbf{0.168} & \textbf{100.497} & \textbf{0.008} & \textbf{0.058}   \\
\midrule\multirow{8}{*}{\begin{tabular}[c]{@{}c@{}}Sim-\\ Swap\end{tabular}}     
 & ORI & 0   & 0.018 & 0.465 & 0.029 & 0.004 & 0.000  \\
 & PG & 0.05    & 0.023 & 0.411 & 0.034 & 0.002 & 0.026 \\
 & DA & 0.05   & 0.111 & 0.380 & 0.790 & 0.005 & 0.036  \\
 & TAA & 0.05   & 0.037 & 0.456 & 0.128 & 0.001 & 0.008  \\
 & LSA & 0.05    & 0.039 & 0.430 & 0.097 & 0.002 & 0.026 \\
 & SAA & 0.05   & 0.038  & 0.443 & 0.113 & 0.001 & 0.020  \\
 & CMUA & 0.05    & 0.078 & 0.398 & 0.336 & 0.004 & 0.033  \\

 & TCA-GAN & 0.05  &	0.010 & 0.481	&0.013&0.000&0.010 \\
 & TAFIM & 0.05  &	0.015  & 0.450&	0.024&	0.001&	0.014\\

 & AIR & 0.02   & \textbf{0.758} & \textbf{0.175} & \textbf{91.010} & \textbf{0.011} & \textbf{0.048} \\
\midrule\multirow{8}{*}{\begin{tabular}[c]{@{}c@{}}Face-\\ Shifter\end{tabular}}  
 & ORI & 0  & 0.018 & 0.439 & 0.049 & 0.003 & 0.000   \\
 & PG & 0.05    & 0.031 & 0.443 & 0.031 & 0.002 & 0.023   \\
 & DA & 0.05    & 0.191 & 0.341 & 2.081 & 0.006 & 0.042  \\
 & TAA & 0.05    & 0.023 & 0.437 & 0.040 & 0.000 & 0.007  \\
 & LSA & 0.05    & 0.115 & 0.364 & 1.059 & 0.005 & 0.038   \\
 & SAA & 0.05     & 0.048 & 0.415 & 0.222 & 0.002 & 0.025  \\
 & CMUA & 0.05  & 0.107 & 0.372 & 0.612 & 0.005 & 0.040   \\

 & TCA-GAN & 0.05 &	0.010  & 0.458	&0.012&	0.000	&0.014\\

 & TAFIM & 0.05   &0.009  & 0.460	&0.013		&0.000	&0.014\\

 & AIR & 0.02 &  \textbf{0.838} & \textbf{0.151} & \textbf{117.003} & \textbf{0.010} & \textbf{0.051}   \\
                             \bottomrule
\end{tabular}

	\label{tb:preprocess_rotate}

\end{table*}

\end{document}